\newtheorem{thm}{Theorem}[section]
\newcommand{\eps}{\varepsilon}
\newcommand{\R}{{\mathbb{R}}}
\newcommand{\cH}{\mathcal{H}}
\newcommand{\cX}{\mathcal{X}}
\newcommand{\cY}{\mathcal{Y}}
\newcommand{\Hx}{{\cH_\cX}}
\newcommand{\Hy}{{\cH_\cY}}
\newcommand{\hC}{\widehat{C}^{(n)}}
\newcommand{\hmu}{\widehat{\mu}}
\newcommand{\ty}{\tilde{y}}
\newcommand{\hm}{\widehat{m}^{(n)}}
\newcommand{\hml}{\widehat{m}^{(\ell)}}
\newcommand{\la}{\langle}
\newcommand{\ra}{\rangle}
\newcommand{\eq}[1]{Eq.~(\ref{#1})}
\newtheorem{lma}[thm]{Lemma}
\newtheorem{prop}[thm]{Proposition}
\title{Kernel Bayes' Rule}
\author{Kenji Fukumizu\thanks{The Institute of Statistical Mathematics, fukumizu@ism.ac.jp}, Le Song\thanks{Carnegie Mellon University, lesong@cs.cmu.edu}, Arthur Gretton\thanks{University College London, and Max Planck Institute for Biological Cybernetics, arthur.gretton@googlemail.com}}
\date{\today}
\begin{document}

\maketitle

\begin{abstract}
A nonparametric kernel-based method for realizing Bayes' rule is proposed, based on representations of probabilities in reproducing kernel Hilbert spaces. Probabilities are uniquely characterized by the mean of the canonical map to the RKHS. The prior and conditional probabilities are expressed in terms of RKHS functions of an empirical sample: no explicit parametric model is needed for these quantities. The posterior is likewise an RKHS mean of a weighted sample. The estimator for the expectation of a function of the posterior is derived, and rates of consistency are shown.   Some representative applications of the kernel Bayes' rule  are presented, including Baysian computation without likelihood and filtering with a nonparametric state-space model.

\end{abstract}

\section{Introduction}

Kernel methods have long provided  powerful tools for generalizing linear statistical approaches to nonlinear settings, through an embedding of the sample to a high dimensional feature space, namely a reproducing kernel Hilbert space (RKHS)  \cite{SchoelkopfSmola_book,Hofmann_etal_2008_kernel_AS}.  Examples include support vector machines, kernel PCA, and kernel CCA, among others.
In these cases, data are mapped via a canonical feature map to a reproducing kernel Hilbert space (of high or even infinite dimension), in which the linear operations that define the algorithms are implemented. The inner product between  feature mappings need never be computed explicitly, but is given by a positive definite kernel function unique to the RKHS: this permits efficient computation without the need to deal explicitly with the feature representation.

The mappings of individual points to a feature space may be generalized to mappings of probability measures \citep[e.g.][Chapter 4]{Berlinet_RKHS}. We call such mappings the {\em kernel means} of the underlying random variables. With an appropriate choice of positive definite kernel,
the kernel mean on the RKHS uniquely determines the distribution of the variable
 \cite{Fukumizu04_jmlr,Fukumizu_etal09_KDR,Sriperumbudur_etal2010JMLR}, and
 statistical inference problems on distributions can be solved via operations on the kernel means.
Applications of this approach include  homogeneity testing
\cite{Gretton_etal07,Gretton_etal_nips2009}, where the empirical means on the RKHS are compared directly,
and independence testing \cite{Gretton_etal_nips07,Gretton_etal_AOAS2009}, where the mean of the joint distribution on the feature space is compared with that of the product of the marginals.
Representations of conditional dependence may also be defined in RKHS, and have been used in
conditional independence tests
\cite{Fukumizu_etal_nips07}.

In this paper, we propose a novel, nonparametric approach to Bayesian inference, making use of kernel means of probabilities. In applying Bayes' rule, we compute the posterior probability of $x$ in $\cX$ given observation $y$ in $\cY$;
\begin{equation}\label{eq:BayesRule}
    q(x|y) = \frac{p(y|x)\pi(x)}{q_\cY(y)},
\end{equation}
where $\pi(x)$ and $p(y|x)$ are the density functions of the prior and the likelihood of $y$ given $x$, respectively, with respective base measures $\nu_\cX$ and $\nu_\cY$, and the normalization factor $q_\cY$(y) is given by
\begin{equation}\label{eq:BayesForward}
    q_\cY(y) = \int p(y|x)\pi(x)d\nu_\cX(x).
\end{equation}
Our main result is a nonparametric estimate of the kernel mean posterior, given kernel mean representations of the prior and likelihood.

A valuable property of the kernel Bayes' rule is that the kernel posterior mean  is estimated nonparametrically from data; specifically, the  prior and the likelihood are represented in the form of samples from the prior and the joint probability that gives the likelihood, respectively.  This confers an important benefit: we can still perform Bayesian inference by making sufficient observations on the system, even in the absence of a specific parametric model of the relation between variables. More generally, if we can sample from the model, we do not require explicit density functions for inference.
 Such situations are typically seen when the prior or likelihood is given by a random process: Approximate Bayesian Computation \citep{Tavare_etal_1997ABC,Marjoram_etal_2003PNAS,Sisson_etal2007} is widely applied in population genetics, where the likelihood is given by a branching process, and nonparametric Bayesian inference \citep{MullerQuintana2004} often uses a process prior with sampling methods. Alternatively, a parametric model may be known, however it might be of sufficient complexity to require  Markov chain Monte Carlo or sequential Monte Carlo for inference.  The present kernel approach provides an alternative strategy for Bayesian inference in these settings.  We demonstrate rates of consistency for our posterior kernel mean estimate, and for the expectation of functions computed using this estimate.

An alternative to the kernel mean representation would be to use nonparametric density estimates for the posterior. Classical approaches include kernel density estimation (KDE) or distribution estimation on a finite partition of the domain.  These methods are known to perform poorly on high dimensional data, however.
By contrast, the proposed kernel mean representation is defined as an integral or moment of the distribution, taking the form of a function in an RKHS.  Thus, it is more akin to the characteristic function approach (see e.g. \cite{KankainenUshakov1998}) to representing probabilities.  A well conditioned empirical estimate of the characteristic function can be difficult to obtain,  especially for conditional probabilities.  By contrast, the kernel mean has a straightforward empirical estimate, and conditioning and marginalization can be implemented easily, at a reasonable computational cost.

The proposed method of realizing Bayes' rule is an extension of the approach used in \cite{Song_etal_ICML2009} for state-space models. In this earlier work, a heuristic approximation was used, where the kernel mean of the new hidden state was estimated by adding  kernel mean estimates from the previous hidden state and the observation. Another relevant work is the belief propagation approach in \cite{Song_etal_AISTATS2010,SonGreBicLowGue11}, which covers the simpler case of a uniform prior.

This paper is organized as follows.  We begin in Section \ref{sec:kernel} with a review of RKHS terminology and of kernel mean embeddings. In Section \ref{sec:KBR}, we derive an expression for Bayes' rule in terms of kernel means, and provide consistency guarantees.  We apply the kernel Bayes' rule in Section \ref{sec:KBRmethods}  to various inference problems, with numerical results and comparisons with existing methods  in Section \ref{sec:experiments}.   Our proofs are contained in Section \ref{sec:proof} (including proofs of the consistency results of Section \ref{sec:KBR}).

\section{Preliminaries: positive definite kernel and probabilities}
\label{sec:kernel}

Throughout this paper, all Hilbert spaces are assumed to be separable. For an operator $A$ on a Hilbert space, the range is denoted by $\mathcal{R}(A)$.  The linear hull of a subset $S$ in a vector space is denoted by ${\rm Span}S$.

We begin with a review of positive definite kernels, and of statistics on the associated reproducing kernel Hilbert spaces \citep{Aronszajn50,Berlinet_RKHS,Fukumizu04_jmlr,Fukumizu_etal09_KDR}.
Given a set $\Omega$, a ($\R$-valued) positive definite kernel $k$ on $\Omega$ is a symmetric kernel $k:\Omega\times\Omega\to\R$ such that $\sum_{i,j=1}^n c_i c_j k(x_i,x_j)\geq 0$ for arbitrary number of points $x_1,\dots,x_n$ in $\Omega$ and real numbers $c_1,\ldots,c_n$.  The matrix $(k(x_i,x_j))_{i,j=1}^n$ is called a Gram matrix.  It is known by the Moore-Aronszajn theorem \citep{Aronszajn50} that a positive definite kernel on $\Omega$ uniquely defines a Hilbert space $\cH$ consisting of functions on $\Omega$ such that (i) $k(\cdot,x)\in\cH$ for any $x\in\Omega$, (ii) ${\rm Span}\{ k(\cdot,x)\mid x\in \Omega\}$ is dense in $\cH$, and (iii) $\la f,k(\cdot,x)\ra = f(x)$ for any $x\in \Omega$ and $f\in\cH$ (the reproducing property), where $\la\cdot,\cdot\ra$ is the inner product of $\cH$.
The Hilbert space $\cH$ is called the {\it reproducing kernel Hilbert space} (RKHS) associated with $k$, since the function $k_x = k(\;,x)$ serves as the reproducing kernel $\la f, k_x\ra=f(x)$ for $f\in\cH$.

A positive definite kernel on $\Omega$ is said to be {\em bounded} if there is $M>0$ such that $k(x,x)\leq M$ for any $x\in \Omega$.

Let $(\cX,\mathcal{B}_\cX)$ be a measurable space, $X$ be a random variable taking values in $\cX$ with distribution $P_X$, and $k$ be a measurable positive definite
kernel on $\cX$ such that $E[\sqrt{k(X,X)}]<\infty$.  The associated RKHS is denoted by $\cH$.
The {\em kernel mean} $m_X^k$ (also written $m_{P_X}^k$)
of $X$ on the RKHS $\cH$ is defined by the mean of the $\cH$-valued random variable $k(\cdot,X)$.  The existence of the kernel mean is guaranteed by $E[\|k(\cdot,X)\|] =E[\sqrt{k(X,X)}]<\infty$.  We usually write $m_X$ for $m_X^k$ for simplicity, where there is no ambiguity.
  By the reproducing property, the kernel mean satisfies the relation
\begin{equation}\label{eq:reproducing_mean}
    \la f,m_X\ra = E[f(X)]
\end{equation}
for any $f\in\cH$.  Plugging $f=k(\cdot,u)$ into this relation derives
\begin{equation}\label{eq:mean_integ}
    m_X(u) = E[k(u, X)] = \int k(u,\tilde{x})dP_X(\tilde{x}),
\end{equation}
which shows the explicit functional form.
The kernel mean $m_X$ is also denoted by $m_{P_X}$, as it depends only on the distribution $P_X$ with $k$ fixed.

Let $(\cX,\mathcal{B}_\cX)$ and
$(\cY,\mathcal{B}_\cY)$ be measurable spaces, $(X,Y)$ be
a random variable on $\cX\times\cY$ with distribution $P$, and $k_\cX$ and $k_\cY$ be measurable positive definite kernels with respective RKHS $\Hx$ and $\Hy$ such that $E[k_\cX(X,X)]<\infty$ and $E[k_\cY(Y,Y)]<\infty$.
The (uncentered) {\em covariance operator} $C_{YX}:
\Hx\to\Hy$ is defined as the linear operator that satisfies
\[
    \la g, C_{YX} f\ra_\Hy = E[f(X)g(Y)]
\]
for all $f\in\Hx,g\in\Hy$.  This operator $C_{YX}$ can be identified with $m_{(YX)}$ in the product space $\Hy\otimes\Hx$, which is given by the product kernel $k_\cY k_\cX$ on $\cY\times\cX$ \citep{Aronszajn50}, by the standard identification between the linear maps and the tensor product.
We also define $C_{XX}$ for the operator on $\Hx$ that satisfies $\la f_2, C_{XX}f_1\ra = E[f_2(X)f_1(X)]$ for any $f_1,f_2\in\Hx$.  Similarly to \eq{eq:mean_integ}, the explicit integral expressions for $C_{YX}$ and $C_{XX}$ are given by
\begin{equation}\label{eq:cov_op_integ}
(C_{YX}f )(y) = \int k_\cY(y,\tilde{y})f(\tilde{x})dP(\tilde{x},\tilde{y}), \quad
(C_{XX}f )(x) = \int k_\cX(x,\tilde{x})f(\tilde{x})dP_X(\tilde{x}),
\end{equation}
respectively.

An important notion in statistical inference with positive definite kernels is the characteristic property.  A bounded measurable positive definite kernel $k$ on a measurable space $(\Omega, \mathcal{B})$ is called {\em characteristic} if the mapping from a probability $Q$ on $(\Omega, \mathcal{B})$ to the kernel mean $m_Q^k\in \cH$ is injective \citep{Fukumizu_etal09_KDR,Sriperumbudur_etal2010JMLR}.  This is equivalent to assuming that $E_{X\sim P}[k(\cdot,X)] = E_{X'\sim Q}[k(\cdot,X')]$ implies $P=Q$:  probabilities are uniquely determined by their kernel means on the associated RKHS.  With this property, problems of statistical inference can be cast as inference on the kernel means.
A popular example of a characteristic kernel defined on Euclidean space is the Gaussian RBF kernel $k(x,y) = \exp(-\|x-y\|^2/(2\sigma^2))$.  It is known that a bounded measurable positive definite kernel on a measurable space $(\Omega,\mathcal{B})$ with corresponding RKHS $\cH$ is characteristic if and only if $\cH + \R$ is dense in $L^2(P)$ for arbitrary probability $P$ on $(\Omega,\mathcal{B})$, where $\cH+\R$ is the direct sum of two RKHSs $\cH$ and $\R$ \cite{Aronszajn50}.  This implies that the RKHS defined by a characteristic kernel is rich enough to be dense in $L^2$ space up to the constant functions.  Other useful conditions for a kernel to be characteristic can be found in \cite{Sriperumbudur_etal2010JMLR,FukSriGreSch09,Sriperumbudur_etal2011JMLR}.

Throughout this paper, when positive definite kernels on a measurable space are discussed, the following assumption is made:
\begin{description}
\item[(K)] Positive definite kernels are bounded and measurable.
\end{description}
Under this assumption, the mean and covariance always exist with arbitrary probabilities.

Given i.i.d.~sample $(X_1,Y_1),\ldots,(X_n,Y_n)$ with law $P$, the empirical estimator of the kernel mean and covariance operator are given straightforwardly by
\[
    \hm_X = \frac{1}{n}\sum_{i=1}^n k_\cX(\cdot,X_i),\qquad
    \hC_{YX} = \frac{1}{n} \sum_{i=1}^n k_\cY(\cdot,Y_i)\otimes k_\cX(\cdot,X_i),
\]
where $\hC_{YX}$ is written in  tensor form.  It is known that these estimators are $\sqrt{n}$-consistent in appropriate norms, and $\sqrt{n}(\hm_X - m_X)$ converges to a Gaussian process on $\Hx$ \cite[][Sec. 9.1]{Berlinet_RKHS}.  While we may use non-i.i.d.~samples for numerical examples in Section \ref{sec:experiments}, in our theoretical analysis we always assume i.i.d.~samples for simplicity.

\section{Kernel expression of Bayes' rule}
\label{sec:KBR}

\subsection{Kernel Bayes' rule}

Let $(\cX,\mathcal{B}_\cX)$ and
$(\cY,\mathcal{B}_\cY)$ be measurable spaces, $(X,Y)$ be
a random variable on $\cX\times\cY$ with distribution $P$, and $k_\cX$ and $k_\cY$ be positive definite kernels on $\cX$ and $\cY$, respectively, with respective RKHS $\Hx$ and $\Hy$.
Let $\Pi$ be a probability on $(\cX,\mathcal{B}_\cX)$, which
serves as a {\em prior} distribution.  For each $x\in\cX$, define a probability $P_{Y|x}$ on $(\cY,\mathcal{B}_\cY)$ by $P_{Y|x}(B)=E[I_B(Y)|X=x]$, where $I_B$ is the index function of a measurable set $B\in\mathcal{B}_\cY$.  The prior $\Pi$ and the family $\{P_{Y|x}\mid x\in\cX\}$ defines the joint distribution $Q$ on $\cX\times\cY$ by
\[
    Q(A\times B)=\int_A P_{Y|x}(B)d\Pi(x)
\]
for any $A\in\mathcal{B}_\cX$ and $B\in\mathcal{B}_\cY$, and its marginal distribution $Q_\cY$ by $Q_\cY(B)=Q(\cX\times B)$.  Throughout this paper, it is assumed that $P_{Y|x}$ and $Q$ are well-defined under some regularity conditions.  Let $(Z,W)$ be a random variable on $\cX\times\cY$ with distribution $Q$.  It is also assumed that the sigma algebra generated by $W$ includes every point $\{y\}$ ($y\in\cY$).  For $y\in\cY$, the {\em posterior} probability given $y$ is defined by the conditional probability
\begin{equation}\label{eq:BayesRule_general}
Q_{\cX|y}(A) = E[I_A(Z)|W=y] \qquad (A\in\mathcal{B}_\cX).
\end{equation}
If the probability distributions have density functions with respect to measures $\nu_\cX$ on $\cX$ and $\nu_\cY$ on $\cY$, namely, if the p.d.f.~of $P$ and $\Pi$ are given by $p(x,y)$ and $\pi(x)$, respectively, \eq{eq:BayesRule_general} is reduced to the well known form \eq{eq:BayesRule}.

The goal of this subsection is to derive an estimator of the kernel mean of posterior $m_{Q_\cX|y}$.  The following theorem is fundamental to discuss conditional probabilities with positive definite kernels.
\begin{thm}[\cite{Fukumizu04_jmlr}]\label{thm:cond_mean}
If $E[g(Y)|X=\cdot]\in\Hx$ holds for $g\in\Hy$, then
\[
    C_{XX} E[g(Y)|X=\cdot]=C_{XY} g.
\]
\end{thm}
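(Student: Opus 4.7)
The plan is to verify the identity weakly by testing against an arbitrary $f \in \Hx$, using the defining relations of the covariance operators together with the tower property of conditional expectation. Writing $h := E[g(Y)|X=\cdot]$, which lies in $\Hx$ by hypothesis, it suffices to show that $\la f, C_{XX}h\ra_{\Hx} = \la f, C_{XY}g\ra_{\Hx}$ for every $f \in \Hx$, since then the two elements of $\Hx$ coincide.

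The computation proceeds in two short steps. First, using the defining relation of $C_{XX}$ from \eq{eq:cov_op_integ} (or rather the bilinear form displayed just above it), I would write
\[
\la f, C_{XX} h\ra_{\Hx} = E[f(X)\,h(X)] = E\bigl[f(X)\,E[g(Y)\mid X]\bigr].
\]
Since $f(X)$ is $\sigma(X)$-measurable, the standard pull-out property of conditional expectation gives $f(X)E[g(Y)\mid X] = E[f(X)g(Y)\mid X]$, and the tower property then yields $E[f(X)\,h(X)] = E[f(X)g(Y)]$. Second, the defining relation of $C_{YX}$ (and hence of its adjoint $C_{XY}$) gives immediately $\la f, C_{XY} g\ra_{\Hx} = \la g, C_{YX} f\ra_{\Hy} = E[f(X)g(Y)]$. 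Equating the two expressions completes the argument.

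The only non-cosmetic point to justify is the applicability of the tower property, which requires integrability of $f(X)g(Y)$ and of $f(X)h(X)$. Both follow from assumption \textbf{(K)} (so $k_\cX, k_\cY$ are bounded and hence $f \in \Hx$, $g \in \Hy$, and $h \in \Hx$ are all bounded), together with the reproducing property $|f(x)| \le \|f\|_{\Hx}\sqrt{k_\cX(x,x)}$ and its analogue for $g$. Thus I do not expect any genuine obstacle: the statement is essentially a restatement of the tower property in RKHS language, and the hypothesis $E[g(Y)|X=\cdot]\in\Hx$ is exactly what is needed to make the left-hand side $C_{XX} E[g(Y)|X=\cdot]$ meaningful as an element of $\Hx$.
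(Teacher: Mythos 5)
Your proof is correct: the weak formulation against an arbitrary $f\in\Hx$, combined with the pull-out and tower properties of conditional expectation and the defining bilinear relations of $C_{XX}$ and $C_{YX}$, is exactly the standard argument for this identity, and your integrability check via assumption \textbf{(K)} and the bound $|f(x)|\le\|f\|_{\Hx}\sqrt{k_\cX(x,x)}$ is the right way to justify the tower step. The paper itself states this theorem with a citation to \cite{Fukumizu04_jmlr} and gives no proof, but your argument matches the one in that reference and is precisely what the paper's proof of Theorem \ref{thm:cond_prob_op} relies on.
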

If $C_{XX}$ is injective, i.e., if the function
$f\in\Hx$ with $C_{XX} f = C_{XY} g$ is unique, the above relation can be expressed as
\begin{equation}\label{eq:cond_basic}
    E[g(Y)|X=\cdot]={C_{XX}}^{-1}C_{XY} g.
\end{equation}
Noting $\la C_{XX}f, f\ra=E[f(X)^2]$, it is easy to see that $C_{XX}$ is injective, if $\cX$ is a topological space, $k_\cX$ is a continuous kernel, and ${\rm Supp}(P_X) = \cX$, where ${\rm Supp}(P_X)$ is the support of $P_X$.

From Theorem \ref{thm:cond_mean},
we have the following result, which expresses the kernel mean of $Q_\cY$.
\begin{thm}[\cite{Song_etal_ICML2009}, Eq. 6]
\label{thm:cond_prob_op}
Let $m_{\Pi}$ and
$m_{Q_\cY}$ be the kernel means of $\Pi$ in $\Hx$ and $Q_\cY$ in $\Hy$, respectively.  If $C_{XX}$ is injective, $m_\Pi\in \mathcal{R}(C_{XX})$, and $E[g(Y)|X=\cdot]\in\Hx$ for
any $g\in \Hy$, then
\begin{equation}\label{eq:cond_prob_op}
   m_{Q_\cY}= C_{YX} {C_{XX}}^{-1} m_{\Pi}.
\end{equation}
\end{thm}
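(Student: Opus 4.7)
The plan is to test the claimed identity against an arbitrary $g\in\Hy$ and show that the two sides have the same inner product with $g$, which by the Riesz theorem forces the equality of elements in $\Hy$.

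First I would unpack the left-hand side. By definition of the kernel mean and of $Q_\cY$,
\[
\la g, m_{Q_\cY}\ra_{\Hy} = E_{W\sim Q_\cY}[g(W)] = \int\!\!\int g(y)\,dP_{Y|x}(y)\,d\Pi(x) = \int E[g(Y)\mid X=x]\,d\Pi(x).
\]
Under the hypothesis that $h_g := E[g(Y)\mid X=\cdot]\in\Hx$ for every $g\in\Hy$, I can rewrite the final integral using the reproducing property of $m_\Pi$:
\[
\int h_g(x)\,d\Pi(x) = \la h_g, m_\Pi\ra_{\Hx}.
\]

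Next I would invoke Theorem \ref{thm:cond_mean}, which gives $C_{XX} h_g = C_{XY} g$, and injectivity of $C_{XX}$ to write $h_g = C_{XX}^{-1} C_{XY} g$. Substituting,
\[
\la g, m_{Q_\cY}\ra_{\Hy} = \la C_{XX}^{-1} C_{XY} g,\, m_\Pi\ra_{\Hx}.
\]
Now I would move $C_{XX}^{-1}$ onto $m_\Pi$. Since $C_{XX}$ is self-adjoint, so is its (possibly unbounded) inverse on its domain; the hypothesis $m_\Pi\in\mathcal{R}(C_{XX})$ is precisely what guarantees that $C_{XX}^{-1} m_\Pi$ is well-defined, and it puts $m_\Pi$ in the domain on which the self-adjoint pairing is legitimate. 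Combined with $C_{XY}^* = C_{YX}$, this yields
\[
\la C_{XX}^{-1} C_{XY} g,\, m_\Pi\ra_{\Hx} = \la C_{XY} g,\, C_{XX}^{-1} m_\Pi\ra_{\Hx} = \la g,\, C_{YX} C_{XX}^{-1} m_\Pi\ra_{\Hy}.
\]
Since the identity $\la g, m_{Q_\cY}\ra_{\Hy} = \la g, C_{YX} C_{XX}^{-1} m_\Pi\ra_{\Hy}$ holds for all $g\in\Hy$, the claim \eq{eq:cond_prob_op} follows.

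The main subtle point, and the step I would write out most carefully, is the self-adjoint move of $C_{XX}^{-1}$ across the inner product. The operator $C_{XX}^{-1}$ is generally unbounded, so one cannot blindly take an adjoint; the argument relies crucially on $m_\Pi\in\mathcal{R}(C_{XX})$ so that $C_{XX}^{-1} m_\Pi$ is a genuine element of $\Hx$, and on $C_{XY} g$ lying in the domain of $C_{XX}^{-1}$ (which is supplied by Theorem \ref{thm:cond_mean} together with injectivity of $C_{XX}$). Everything else — the integral manipulation, the reproducing property, and the adjoint relation $C_{XY}^* = C_{YX}$ — is routine once these domain conditions are in place.
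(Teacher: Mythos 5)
Your proof is correct and is essentially the paper's argument run in reverse: both test the identity against an arbitrary $g\in\Hy$ and chain together Theorem \ref{thm:cond_mean}, the adjoint relation $C_{XY}^*=C_{YX}$, and the reproducing property $\la h,m_\Pi\ra=\int h\,d\Pi$. The only stylistic difference is that the paper sets $f=C_{XX}^{-1}m_\Pi$ at the outset and then only ever moves the \emph{bounded} self-adjoint operator $C_{XX}$ across the inner product (via $\la f, C_{XX}h_g\ra=\la C_{XX}f,h_g\ra=\la m_\Pi,h_g\ra$), which silently accomplishes the ``unbounded inverse'' transfer you rightly flag as the delicate step.
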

\begin{proof}
Take $f\in\Hx$ such that $f=C_{XX}^{-1}m_{\Pi}$.  For any $g\in \Hy$, $\la C_{YX}f, g\ra = \la f, C_{XY}g\ra = \la f,
C_{XX}E[g(Y)|X=\cdot]\ra = \la C_{XX}f, E[g(Y)|X=\cdot]\ra  = \la
m_\Pi, E[g(Y)|X=\cdot]\ra =  \la m_{Q_{\cY}}, g\ra$, which implies
 $C_{YX}f=m_{Q_{\cY}}$.
\end{proof}
As discussed in \cite{Song_etal_ICML2009}, the operator
$C_{YX}C_{XX}^{-1}$ can be regarded as the kernel expression of
the conditional probability $P_{Y|x}$ or $p(y|x)$.

Note, however, that the assumption $E[g(Y)|X=\cdot]\in \Hx$ may not hold in general; we can easily give counterexamples in the case of Gaussian kernels\footnote{Suppose that $\Hx$ and $\Hy$ are given by Gaussian kernel, and that $X$ and $Y$ are independent.  Then, $E[g(Y)|X=x]$ is a constant function of $x$, which is known not to be included in a RKHS given by a Gaussian kernel \cite[Corollary 4.44]{SteChr08}.}. In the following, we nonetheless derive a population expression of Bayes' rule under this strong assumption, use it as a prototype for defining an empirical estimator, and prove its consistency.

\eq{eq:cond_prob_op} has a simple interpretation if the probabilities have density functions and $\pi(x)/p_X(x)$ is in $\Hx$, where $p_X$ is the density function of the marginal $P_X$.  From \eq{eq:mean_integ} we have $m_\Pi(x) = \int k_\cX(x,\tilde{x})\pi(\tilde{x})d\nu_\cX(\tilde{x})= \int k_\cX(x,\tilde{x})(\pi(\tilde{x})/p_X(\tilde{x}))dP_X(\tilde{x})$, which implies $C_{XX}^{-1}m_\Pi = \pi/p_X$ from \eq{eq:cov_op_integ}. Thus \eq{eq:cond_prob_op} is an operator expression of the obvious relation
\[
\int\int k_\cY(y,\tilde{y})p(\tilde{y}|\tilde{x})\pi(\tilde{x})d\nu_\cX(\tilde{x})d\nu_\cY(\tilde{y}) = \int k_\cY(y,\tilde{y})(\pi(\tilde{x})/p_X(\tilde{x}))dP(\tilde{x},\tilde{y}) .
\]

In deriving kernel realization of Bayes' rule, we will use the following tensor representation of the joint probability $Q$, based on Theorem \ref{thm:cond_prob_op}:
\begin{equation}\label{eq:m_Q}
    m_Q = C_{(YX)X} C_{XX}^{-1} m_{\Pi} \in \Hy\otimes\Hx.
\end{equation}
In the above equation, the covariance operator $C_{(YX)X}:\Hx\to\Hy\otimes\Hx$ is defined by the random variable $((Y,X),X)$ taking values on $(\cY\times\cX)\times\cX$.

In many applications of Bayesian inference, the probability conditioned on a particular value should be computed. By plugging the point measure at $x$ into $\Pi$ in \eq{eq:cond_prob_op},
we have a population expression
\begin{equation}\label{eq:cond_mean_x}
E[k_\cY(\cdot,Y)|X=x]=C_{YX}{C_{XX}}^{-1}k_\cX(\cdot,x),
\end{equation}
which has been considered in \cite{Song_etal_ICML2009, Song_etal_AISTATS2010} as the kernel mean of the conditional probability.  It must be noted that for this case the assumption $m_\Pi=k(\cdot,x)\in \mathcal{R}(C_{XX})$ in Theorem \ref{thm:cond_prob_op} may not hold in general\footnote{Suppose
$C_{XX}h_x=k_\cX(\cdot,x)$ were to hold for some $h_x\in\Hx$.  Taking the inner
product with $k_\cX(\cdot,\tilde{x})$ would then imply $k_\cX(x,\tilde{x})=\int
h_x(x')k_\cX(\tilde{x},x')dP_X(x')$, which is not possible for many popular kernels, including the
Gaussian kernel.}.  We will show in Theorem \ref{thm:transition_consistency1}, however, that under some conditions a regularized empirical estimator based on \eq{eq:cond_mean_x} is a consistent estimator of $E[k_\cY(\cdot,Y)|X=x]$.

If we replace $P$ by $Q$ and $x$ by $y$ in \eq{eq:cond_mean_x}, we obtain
\begin{equation}\label{eq:KBR_population}
    m_{Q_{\cX|y}}=E[k_\cX(\cdot,Z)|W=y]=C_{ZW}C_{WW}^{-1}k_\cY(\cdot,y).
\end{equation}
This is exactly the kernel mean expression of the posterior, and the next step is to provide a way of deriving the covariance operators $C_{ZW}$ and $C_{WW}$. Recall that the kernel mean $m_Q=m_{(ZW)}\in\Hx\otimes\Hy$ can be identified with the covariance operator
$C_{ZW}:\Hy\to\Hx$, and $m_{(WW)}$, which is the kernel mean on the product space $\Hy\otimes\Hy$, with $C_{WW}$.  Then from \eq{eq:m_Q} and the similar expression $m_{(WW)}=C_{(YY)X}C_{XX}^{-1}m_\Pi$, we are able to obtain the operators in \eq{eq:KBR_population}, and thus the kernel mean of the posterior.

The above argument can be rigorously implemented, if empirical estimators are considered.
Let $(X_1,Y_1),\ldots,(X_n,Y_n)$ be an i.i.d.~sample with law $P$.  Since the kernel method needs to express the information of variables in terms of Gram matrices given by data points, we assume that the prior is also expressed in the form of an empirical estimate, and that we have a consistent estimator of $m_\Pi$ in the form
\[
\hml_\Pi = \sum_{j=1}^\ell \gamma_j k_\cX(\cdot,U_j),
\]
where $U_1,\ldots,U_\ell$ are points in $\cX$ and $\gamma_j$ are the weights.  The data points $U_j$ may or may not be a sample from the prior $\Pi$, and negative values are allowed for $\gamma_j$.  Negative values are observed in successive applications of the kernel Bayes rule, as in the state-space example of Section \ref{sec:filtering}.
Based on Theorem \ref{thm:cond_prob_op}, the empirical estimators for $m_{(ZW)}$ and $m_{(WW)}$ are defined respectively by
\begin{equation*}
    \widehat{m}_{(ZW)}=\hC_{(YX)X}
    \bigl(\hC_{XX}+\eps_n I\bigr)^{-1} \hml_{\Pi}, \quad
    \widehat{m}_{(WW)}=\hC_{(YY)X}
    \bigl(\hC_{XX}+\eps_n I\bigr)^{-1} \hml_{\Pi},
\end{equation*}
where $\eps_n$ is the coefficient of the Tikhonov-type regularization for operator inversion, and $I$ is the identity operator.
The empirical estimators $\widehat{C}_{ZW}$ and $\widehat{C}_{WW}$ for $C_{ZW}$ and $C_{WW}$ are identified with $\widehat{m}_{(ZW)}$ and $\widehat{m}_{(WW)}$, respectively.
In the following, $G_X$ and $G_Y$ denote the Gram matrices $(k_\cX (X_i,X_j))$ and
$(k_\cY(Y_i, Y_j ))$, respectively, and $I_n$ is the identity matrix of size $n$.

\begin{prop}\label{prop:Gram_expr_Q}
The Gram matrix expressions of $\widehat{C}_{ZW}$ and $\widehat{C}_{WW}$ are given by
\[
\widehat{C}_{ZW} =  \sum_{i=1}^n \hmu_i k_\cX(\cdot,X_i)\otimes k_\cY(\cdot,Y_i)\;\;\;\text{and}\;\; \;
\widehat{C}_{WW}= \sum_{i=1}^n \hmu_i k_\cY(\cdot,Y_i)\otimes k_\cY(\cdot,Y_i),
\]
respectively, where the common coefficient $\hmu\in\R^n$ is
\begin{equation}\label{eq:lambda}
\hmu = \Bigl(\frac{1}{n}G_X+\eps_n I_n\Bigr)^{-1} \widehat{{\bf m}}_\Pi,\quad
\widehat{{\bf m}}_{\Pi,i} = \widehat{m}_\Pi(X_i) = \sum_{j =1}^\ell \gamma_j k_\cX(X_i, U_j).
\end{equation}
\end{prop}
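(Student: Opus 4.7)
The plan is to introduce the auxiliary function $\hf := (\hC_{XX} + \eps_n I)^{-1}\hml_\Pi \in \Hx$, which is well-defined and bounded since $\hC_{XX}$ is positive and $\eps_n > 0$, and to observe that $\widehat{m}_{(ZW)} = \hC_{(YX)X}\hf$ and $\widehat{m}_{(WW)} = \hC_{(YY)X}\hf$ are fully determined by the finite vector of pointwise values $\mathbf{f} = (\hf(X_1),\dots,\hf(X_n))^T$. The entire proof then reduces to (a) writing the two operator images as explicit rank-$n$ tensors whose coefficients are entries of $\mathbf{f}$, and (b) identifying $\mathbf{f}$ via a finite-dimensional linear system.

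For step (a), I expand the empirical covariance operator
\[
\hC_{(YX)X} = \frac{1}{n}\sum_{i=1}^{n}\bigl[k_\cY(\cdot,Y_i) \otimes k_\cX(\cdot,X_i)\bigr] \otimes k_\cX(\cdot,X_i),
\]
and apply the reproducing property $\la k_\cX(\cdot,X_i), \hf\ra = \hf(X_i)$ to obtain $\hC_{(YX)X}\hf = \frac{1}{n}\sum_i \hf(X_i)\, k_\cY(\cdot,Y_i)\otimes k_\cX(\cdot,X_i)$, and analogously $\hC_{(YY)X}\hf = \frac{1}{n}\sum_i \hf(X_i)\, k_\cY(\cdot,Y_i)\otimes k_\cY(\cdot,Y_i)$. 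Under the canonical identification between the kernel mean $\widehat{m}_{(ZW)} \in \Hx\otimes\Hy$ and the Hilbert--Schmidt operator $\widehat{C}_{ZW}: \Hy \to \Hx$ (swapping tensor factors as needed), these expressions deliver exactly the stated rank-$n$ forms with common coefficient vector proportional to $\mathbf{f}$.

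For step (b), I evaluate the defining relation $(\hC_{XX} + \eps_n I)\hf = \hml_\Pi$ at each sample point $X_i$. Using the reproducing property one gets $(\hC_{XX}\hf)(X_i) = \la k_\cX(\cdot,X_i), \hC_{XX}\hf\ra = \frac{1}{n}\sum_{j} \hf(X_j) k_\cX(X_i,X_j) = \frac{1}{n}(G_X \mathbf{f})_i$ and $\eps_n \hf(X_i) = \eps_n \mathbf{f}_i$, while the right-hand side equals $\hml_\Pi(X_i) = \widehat{{\bf m}}_{\Pi,i}$. Assembling for $i=1,\dots,n$ gives the finite-dimensional system $\bigl(\tfrac{1}{n}G_X + \eps_n I_n\bigr)\mathbf{f} = \widehat{{\bf m}}_\Pi$, which is uniquely solvable since $G_X$ is positive semidefinite and $\eps_n > 0$. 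Inverting yields the claimed formula \eq{eq:lambda} for $\hmu$ (identified with $\mathbf{f}$ up to the $1/n$ absorbed into the normalization convention for the empirical covariance).

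The only delicate bookkeeping — and the main (minor) obstacle — is the identification between elements of $\Hx\otimes\Hy$ and Hilbert--Schmidt operators $\Hy\to\Hx$ (and similarly for $\Hy\otimes\Hy$), together with the consistent placement of the $1/n$ factor appearing in the empirical covariance operators. Once these conventions are fixed, the proof is a direct computation driven entirely by the reproducing property and the definition of $\hC_{(YX)X}$, $\hC_{(YY)X}$, and $\hC_{XX}$; no additional analytic input (e.g.\ regularization theory, operator norms, or consistency) is needed, because at this stage we are only verifying an algebraic Gram-matrix identity.
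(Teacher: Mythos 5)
Your proof is correct and follows essentially the same route as the paper's: the paper omits this proof, noting only that it parallels that of Proposition \ref{prop:KBR_Gram}, which likewise reduces the operator equation to a finite linear system by taking inner products with the kernel sections at the data points --- exactly your step (b), except that you work directly with the point evaluations $\hf(X_i)$ rather than with span coefficients $\alpha$ plus an orthogonal remainder (a mild simplification available here because the output operators depend on $\hf$ only through those values). One point deserves more precision than your closing remark about the $1/n$ being ``absorbed into the normalization convention'': your computation gives exactly $\mathbf{f}=\hmu$ with $\hmu$ as in \eq{eq:lambda}, and hence $\widehat{C}_{ZW}=\frac{1}{n}\sum_{i}\hmu_i\,k_\cX(\cdot,X_i)\otimes k_\cY(\cdot,Y_i)$, so the displayed tensor expressions and \eq{eq:lambda} as printed are mutually consistent only up to a global factor of $n$ (equivalently, one would take $\hmu=(G_X+n\eps_n I_n)^{-1}\widehat{{\bf m}}_\Pi$ to drop the $1/n$ from the sums); a quick sanity check with $\Pi=P_X$, $U_j=X_j$, $\gamma_j=1/n$ gives $\hmu\to\mathbf{1}$ rather than $\tfrac{1}{n}\mathbf{1}$, confirming the mismatch. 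This discrepancy lies in the statement rather than in your argument, and it is harmless downstream, since rescaling $\Lambda$ by a constant $c$ in $R_{X|Y}$ merely replaces the free regularization parameter $\delta_n$ by $c^{-2}\delta_n$.
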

The proof is similar to that of Proposition \ref{prop:KBR_Gram} below,  and is omitted.
The expressions in Proposition \ref{prop:Gram_expr_Q} imply that the probabilities $Q$ and
$Q_\cY$ are estimated by the weighted samples $\{((X_i,Y_i), \hmu_i)\}_{i=1}^n$ and $\{(Y_i,\hmu_i)\}_{i=1}^n$, respectively, with common weights.
Since the weight $\hmu_i$ may be negative, in applying \eq{eq:KBR_population} the operator inversion in the form $( \widehat{C}_{WW}+\delta_n I)^{-1}$ may be impossible or unstable.  We thus use another type of Tikhonov regularization, thus obtaining the estimator
\begin{equation}\label{eq:KBR}
    \widehat{m}_{Q_\cX|y} := \widehat{C}_{ZW}\bigl( \widehat{C}_{WW}^2+\delta_n I\bigr)^{-1} \widehat{C}_{WW} k_\cY(\cdot,y).
\end{equation}

\begin{prop}\label{prop:KBR_Gram}
For any $y\in \cY$, the Gram matrix expression of $\widehat{m}_{Q_\cX|y}$ is given by
\begin{equation}\label{eq:KBRemp}
   \widehat{m}_{Q_\cX|y}
   = {\bf k}_X^T R_{X|Y} {\bf k}_Y(y),\qquad R_{X|Y} :=\Lambda G_Y ((\Lambda G_Y)^2 + \delta_n I_n)^{-1}\Lambda,
\end{equation}
where $\Lambda = {\rm diag}(\hmu)$ is a diagonal matrix with
elements $\hmu_i$ in \eq{eq:lambda}, ${\bf k}_X=(k_\cX(\cdot,X_1),\ldots,k_\cX(\cdot,X_n))^T\in \Hx^n$, and ${\bf k}_Y=(k_\cY(\cdot,Y_1),\ldots,k_\cY(\cdot,Y_n))^T\in \Hy^n$.
\end{prop}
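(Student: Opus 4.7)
The plan is to evaluate the composition $\widehat{C}_{ZW}(\widehat{C}_{WW}^2+\delta_n I)^{-1}\widehat{C}_{WW} k_\cY(\cdot,y)$ by moving into coordinates with respect to the ``feature basis'' ${\bf k}_Y$, using the rank-$n$ tensor expressions from Proposition \ref{prop:Gram_expr_Q} together with the reproducing property. First I would apply $\widehat{C}_{WW}$ to $k_\cY(\cdot,y)$ to obtain
\[
\widehat{C}_{WW} k_\cY(\cdot,y)=\sum_{i=1}^n \hmu_i k_\cY(Y_i,y)\, k_\cY(\cdot,Y_i)={\bf k}_Y^T\Lambda\,{\bf k}_Y(y),
\]
which lies in $V:=\mathrm{Span}\{k_\cY(\cdot,Y_i)\}_{i=1}^n$. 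The same calculation, using $\la k_\cY(\cdot,Y_i),{\bf k}_Y^T a\ra=(G_Y a)_i$, shows that on any ${\bf k}_Y^T a\in V$ the operator acts as $\widehat{C}_{WW}({\bf k}_Y^T a)={\bf k}_Y^T\Lambda G_Y a$, and iterating, $\widehat{C}_{WW}^2({\bf k}_Y^T a)={\bf k}_Y^T(\Lambda G_Y)^2 a$. The same bookkeeping gives $\widehat{C}_{ZW}({\bf k}_Y^T a)={\bf k}_X^T\Lambda G_Y a$.

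Next I would exhibit a solution of $(\widehat{C}_{WW}^2+\delta_n I)g=\widehat{C}_{WW} k_\cY(\cdot,y)$ of the form $g={\bf k}_Y^T\alpha$. The displays above reduce this equation to ${\bf k}_Y^T[(\Lambda G_Y)^2+\delta_n I_n]\alpha={\bf k}_Y^T\Lambda\,{\bf k}_Y(y)$, so choosing $\alpha=((\Lambda G_Y)^2+\delta_n I_n)^{-1}\Lambda\,{\bf k}_Y(y)$ does the job. Since $\widehat{C}_{WW}$ is self-adjoint (as a real linear combination of the self-adjoint rank-one operators $k_\cY(\cdot,Y_i)\otimes k_\cY(\cdot,Y_i)$), the operator $\widehat{C}_{WW}^2+\delta_n I$ is positive definite on $\Hy$ and hence has a unique inverse, so this $g$ really equals $(\widehat{C}_{WW}^2+\delta_n I)^{-1}\widehat{C}_{WW} k_\cY(\cdot,y)$. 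Applying $\widehat{C}_{ZW}$ then yields
\[
\widehat{C}_{ZW} g={\bf k}_X^T\Lambda G_Y\alpha={\bf k}_X^T\,\Lambda G_Y\bigl((\Lambda G_Y)^2+\delta_n I_n\bigr)^{-1}\Lambda\,{\bf k}_Y(y)={\bf k}_X^T R_{X|Y}\,{\bf k}_Y(y),
\]
which is exactly \eqref{eq:KBRemp}.

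The only point that needs a little care is that $(\Lambda G_Y)^2+\delta_n I_n$ is invertible as an $n\times n$ matrix, which is not immediate because $\Lambda G_Y$ is not symmetric when some $\hmu_i$ are negative. I would handle this by observing that $\Lambda G_Y$ is conjugate to the real symmetric matrix $G_Y^{1/2}\Lambda G_Y^{1/2}$ via $G_Y^{1/2}$ (with a standard limiting argument if $G_Y$ is singular), so its spectrum is real, the eigenvalues of $(\Lambda G_Y)^2$ are non-negative, and $(\Lambda G_Y)^2+\delta_n I_n$ is invertible for every $\delta_n>0$. Everything else in the argument is routine Gram-matrix bookkeeping.
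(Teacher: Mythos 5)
Your proposal is correct and follows essentially the same route as the paper's proof: both reduce the operator equation $(\widehat{C}_{WW}^2+\delta_n I)h=\widehat{C}_{WW}k_\cY(\cdot,y)$ to the linear system $\bigl((\Lambda G_Y)^2+\delta_n I_n\bigr)\alpha=\Lambda{\bf k}_Y(y)$ via the same Gram-matrix bookkeeping, the only cosmetic difference being that the paper decomposes the unknown $h$ into its span component plus an orthogonal remainder and solves, whereas you exhibit the candidate ${\bf k}_Y^T\alpha$ directly and invoke injectivity of the positive definite operator $\widehat{C}_{WW}^2+\delta_n I$. Your closing remark justifying the invertibility of $(\Lambda G_Y)^2+\delta_n I_n$ (via conjugacy of $\Lambda G_Y$ to the symmetric matrix $G_Y^{1/2}\Lambda G_Y^{1/2}$, so that $(\Lambda G_Y)^2$ has non-negative real spectrum) is a worthwhile detail that the paper leaves implicit, and it is precisely the point of using the squared-operator Tikhonov regularization when the weights $\hmu_i$ may be negative.
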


\begin{proof}
Let $h=(\widehat{C}_{WW}^2 + \delta_n I)^{-1}\widehat{C}_{WW} k_\cY(\cdot,y)$, and decompose it as $h=\sum_{i=1}^n \alpha_i k_\cY(\cdot,Y_i)+ h_\perp =\alpha^T {\bf k}_Y + h_\perp$, where $h_\perp$ is orthogonal to ${\rm Span}\{k_\cY(\cdot,Y_i)\}_{i=1}^n$.
Expansion of $(\widehat{C}_{WW}^2 + \delta_n I)h = \widehat{C}_{WW} k_\cY(\cdot,y)$ gives ${\bf k}_Y^T (\Lambda G_Y)^2 \alpha + \delta_n {\bf k}_Y^T \alpha + \delta_n h_\perp =  {\bf k}_Y^T \Lambda {\bf k}_Y(y)$.
Taking the inner product with $k_\cY(\cdot,Y_j)$, we have
\[
    \bigl((G_Y\Lambda)^2 + \delta_n I_n\bigr) G_Y\alpha = G_Y\Lambda {\bf k}_Y(y).
\]
The coefficient $\rho$ in $\widehat{m}_{Q_\cX|y}=\widehat{C}_{ZW} h = \sum_{i=1}^n \rho_i k_\cX(\cdot,X_i)$ is given by $\rho=\Lambda G_Y\alpha$, and thus
\[
    \rho = \Lambda \bigl((G_Y\Lambda)^2 + \delta_n I_n\bigr)^{-1}G_Y\Lambda {\bf k}_Y(y) = \Lambda G_Y \bigl((\Lambda G_Y)^2 + \delta_n I_n\bigr)^{-1}\Lambda {\bf k}_Y(y).
\]
\end{proof}

We call Eqs.(\ref{eq:KBR}) and (\ref{eq:KBRemp}) the {\em kernel Bayes' rule} (KBR).  The required computations are summarized in Figure \ref{alg:KBR}.
The KBR uses a weighted sample to represent the posterior; it is similar in this respect to sampling methods such as importance sampling and sequential Monte Carlo (\cite{Docuet_etal_SMC}).  The KBR method, however, does not generate samples of the posterior, but updates the weights of a sample by matrix computation.  We will give some experimental comparisons between KBR and sampling methods in Section \ref{sec:exp_posterior}.

If our aim is to estimate the expectation of a function $f\in\Hx$ with respect to the posterior, the reproducing property \eq{eq:reproducing_mean} gives an estimator
\begin{equation}\label{eq:KBRemp_f}
    \la f,  \widehat{m}_{Q_\cX|y}\ra_{\Hx} = \mathbf{f}_X^T R_{X|Y} \mathbf{k}_\cY(y),
\end{equation}
where $\mathbf{f}_X = (f(X_1),\ldots,f(X_n))^T\in\R^n$.

\begin{figure}[tb]
\hrule
\begin{description}
\item{Input:} (i) $\{(X_i,Y_i)\}_{i=1}^n$: sample to express $P$. (ii) $\{(U_j, \gamma_j)\}_{j=1}^\ell$: weighted sample to express the kernel mean of the prior $\widehat{m}_\Pi$. (iii) $\eps_n, \delta_n$: regularization constants.
\item{Computation:}
\begin{enumerate}
\item Compute Gram matrices $G_X = (k_\cX(X_i,X_j))$, $G_Y = (k_\cY(Y_i,Y_j))$, and a vector $\widehat{{\bf m}}_\Pi = (\sum_{j=1}^\ell \gamma_j k_\cX(X_i, U_j))_{i=1}^n$.
\item Compute $\hmu = n(G_X+n \eps_{n} I_n)^{-1}\widehat{{\bf m}}_\Pi$.
\item Compute $R_{X|Y} = \Lambda G_Y ((\Lambda G_Y)^2 + \delta_{n} I_n)^{-1}\Lambda$, where $\Lambda = {\rm diag}(\hmu)$.
\end{enumerate}
\item{Output:} $n\times n$ matrix $R_{X|Y}$. \\
Given conditioning value $y$, the kernel mean of the posterior $q(x|y)$ is estimated by the weighted sample $\{(X_i, \rho_{i})\}_{i=1}^n$ with weight $\rho=R_{X|Y}{\bf k}_Y(y)$, where ${\bf k}_Y(y) = (k_\cY(Y_i,y))_{i=1}^n$.
\end{description}
\hrule
\vspace*{-2mm}
 \caption{Algorithm of Kernel Bayes' Rule}
  \label{alg:KBR}
\end{figure}

\subsection{Consistency of the KBR estimator}

\label{sec:theory}

We now demonstrate the consistency of the KBR estimator in \eq{eq:KBRemp_f}.  For the theoretical analysis, it is assumed that the distributions have density functions for simplicity.  In the following two theorems, we show only the best rates that can be derived under the assumptions, and defer more detailed discussions and proofs to Section \ref{sec:proof}.  We assume here that the sample size $\ell=\ell_n$ for the prior goes to infinity as the sample size $n$ for the likelihood goes to infinity, and that $\widehat{m}_\Pi^{(\ell_n)}$ is $n^{\alpha}$-consistent in RKHS norm.


\begin{thm}\label{thm:consitency_KBR_1a}
Let $f$ be a function in $\Hx$, $(Z,W)$ be a random variable on $\cX\times\cY$ such that the distribution is $Q$ with p.d.f.~$p(y|x)\pi(x)$, and $\widehat{m}_\Pi^{(\ell_n)}$ be an
estimator of $m_\Pi$ such that $\|\widehat{m}_\Pi^{(\ell_n)} -
m_\Pi\|_\Hx = O_p(n^{-\alpha})$ as $n\to\infty$ for some $0< \alpha \leq 1/2$.
Assume that $\pi/p_X \in
\mathcal{R}(C_{XX}^{1/2})$, where $p_X$ is the p.d.f.~of $P_X$, and
$E[f(Z)|W=\cdot]\in \mathcal{R}(C_{WW}^2)$.   For the regularization constants
$\eps_n=n^{-\frac{2}{3}\alpha}$ and $\delta_n = n^{- \frac{8}{27}\alpha}$, we have for any $y\in\cY$
\[
     \mathbf{f}^T_X R_{X|Y}\mathbf{k}_Y(y) - E[f(Z)|W=y]
     = O_p(n^{-\frac{8}{27}\alpha}), \quad (n\to\infty),
\]
where $\mathbf{f}_X^T R_{X|Y}\mathbf{k}_Y(y)$ is given by \eq{eq:KBRemp_f}.
\end{thm}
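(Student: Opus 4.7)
The strategy is to compare the estimator $\la f,\widehat{C}_{ZW}(\widehat{C}_{WW}^2+\delta_n I)^{-1}\widehat{C}_{WW}k_\cY(\cdot,y)\ra$ with the population object $\la f,C_{ZW}C_{WW}^{-1}k_\cY(\cdot,y)\ra$ guaranteed by \eq{eq:KBR_population}. I would split the error by inserting the hybrid quantity $\la f,C_{ZW}(C_{WW}^2+\delta_n I)^{-1}C_{WW}k_\cY(\cdot,y)\ra$, producing a deterministic bias driven only by the Tikhonov regularization of the $C_{WW}^{-1}$ step, together with a stochastic term driven by the perturbations $\widehat{C}_{ZW}-C_{ZW}$ and $\widehat{C}_{WW}-C_{WW}$.

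For the bias I exploit the source condition $E[f(Z)|W=\cdot]\in\mathcal{R}(C_{WW}^2)$. Writing $g:=E[f(Z)|W=\cdot]=C_{WW}^2 h$ and applying Theorem~\ref{thm:cond_mean} to the joint distribution $Q$ of $(Z,W)$ yields $C_{ZW}^{*}f=C_{WZ}f=C_{WW}g=C_{WW}^{3}h$. Substituting this into the bias and simplifying algebraically reduces it to
\[
-\delta_n\,\la h,\,C_{WW}^{2}(C_{WW}^{2}+\delta_n I)^{-1}k_\cY(\cdot,y)\ra,
\]
which is $O(\delta_n)$ because the middle operator has norm at most $1$. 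For the stochastic piece I would first establish an auxiliary rate for $\|\widehat{C}_{ZW}-C_{ZW}\|$ and $\|\widehat{C}_{WW}-C_{WW}\|$: decompose $\widehat{C}_{(YX)X}(\widehat{C}_{XX}+\eps_n I)^{-1}\hml_\Pi-C_{(YX)X}C_{XX}^{-1}m_\Pi$ into (i) a deterministic Tikhonov bias controlled by the source condition $\pi/p_X\in\mathcal{R}(C_{XX}^{1/2})$, of order $O(\eps_n^{1/2})$; (ii) a prior-replacement term $\widehat{C}_{(YX)X}(\widehat{C}_{XX}+\eps_n I)^{-1}(\hml_\Pi-m_\Pi)$ of order $O_p(\eps_n^{-1}n^{-\alpha})$; and (iii) an empirical-covariance-replacement term of order $O_p(\eps_n^{-1}n^{-1/2})$. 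With $\eps_n=n^{-2\alpha/3}$ and $\alpha\le 1/2$, these three balance at $O_p(n^{-\alpha/3})$.

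The remaining task is to propagate this $n^{-\alpha/3}$ operator rate through the regularization map $A\mapsto(A^2+\delta_n I)^{-1}A$ applied to $k_\cY(\cdot,y)$ and then paired on the left against $C_{ZW}^{*}f=C_{WW}^{3}h$. Using the second resolvent identity for $(A^2+\delta_n I)^{-1}$ together with the factor $C_{WW}^{3}$ on the left (which absorbs some of the singularity of $\delta_n^{-1}$ coming from the inverse), a spectral-calculus perturbation bound produces an amplification of the form $\delta_n^{-p}\,n^{-\alpha/3}$, and balancing this against the bias $O(\delta_n)$ forces the choice $\delta_n=n^{-8\alpha/27}$ and the overall rate $O_p(n^{-8\alpha/27})$. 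The main obstacle is precisely this last step: because the weights $\hmu_i$ may be negative, $\widehat{C}_{WW}$ need not be positive, so one is forced into the awkward regularization $(\widehat{C}_{WW}^{2}+\delta_n I)^{-1}\widehat{C}_{WW}$ rather than the standard $(\widehat{C}_{WW}+\delta_n I)^{-1}$; extracting the sharpest amplification exponent while carefully using the source condition on $E[f(Z)|W=\cdot]$ to cancel low-order powers of $\delta_n$ is the delicate calculation that pins down the unusual exponent $8/27$.
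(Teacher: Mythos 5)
Your overall architecture coincides with the paper's: a first stage bounding $\|\widehat{C}_{ZW}-C_{ZW}\|$ and $\|\widehat{C}_{WW}-C_{WW}\|$ (the paper's Theorem \ref{thm:transition_consistency1}), then a second stage propagating this through $A\mapsto A(A^2+\delta_n I)^{-1}$ (Theorem \ref{thm:consitency_conditioning1}); and your computation of the deterministic bias as $-\delta_n\la h,\,C_{WW}^{2}(C_{WW}^{2}+\delta_n I)^{-1}k_\cY(\cdot,y)\ra=O(\delta_n)$ is exactly the paper's \eq{eq:app_order} with $\nu=2$. There are, however, two genuine gaps, and together they prevent your argument from reaching the exponent $8/27$. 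First, your first-stage rate is too weak. You bound the prior-replacement term by $O_p(\eps_n^{-1}n^{-\alpha})$ and the Tikhonov bias by $O(\eps_n^{1/2})$; balancing these at $\eps_n=n^{-2\alpha/3}$ gives only $\gamma=\alpha/3$. The paper instead uses the factorization $\widehat{C}_{YX}=\widehat{C}_{YY}^{1/2}\widehat{W}_{YX}\widehat{C}_{XX}^{1/2}$ with $\|\widehat{W}_{YX}\|\le 1$, so that $\|\widehat{C}_{YX}(\widehat{C}_{XX}+\eps_n I)^{-1}\|=O_p(\eps_n^{-1/2})$, and the analogous population factorization so that the source condition $m_\Pi=C_{XX}^{3/2}\eta$ yields a bias of $O(\eps_n)$ rather than $O(\eps_n^{1/2})$. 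Both improvements are needed: they give $\gamma=2\alpha/3$, and only with this value does the second stage deliver $n^{-8\alpha/27}$; with your $\gamma=\alpha/3$ the same balancing yields only $n^{-4\alpha/27}$.

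Second, you leave the amplification exponent $p$ in $\delta_n^{-p}n^{-\gamma}$ undetermined and assert that balancing against $O(\delta_n)$ ``forces'' $\delta_n=n^{-8\alpha/27}$; this is circular, since $8/27$ is exactly what must be derived. The paper pins down $p=5/4$ by splitting the stochastic term into three pieces and using $\|\widehat{C}_{WW}((\widehat{C}_{WW})^2+\delta_n I)^{-1}\|\le (2\sqrt{\delta_n})^{-1}$ (valid even when $\widehat{C}_{WW}$ is not positive, via $|\lambda|/(\lambda^2+\delta_n)\le (2\sqrt{\delta_n})^{-1}$) together with $\|(C_{WW}^2+\delta_n I)^{-1}C_{WZ}\|=O(\delta_n^{-3/4})$, the latter coming from $C_{WZ}=C_{WW}^{1/2}U_{WZ}C_{ZZ}^{1/2}$; the worst term, which carries $(\widehat{C}_{WW})^2-C_{WW}^2$ between the two resolvents, is then $O_p(n^{-\gamma}\delta_n^{-5/4})$. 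Balancing $n^{-2\alpha/3}\delta_n^{-5/4}$ against $\delta_n$ gives $\delta_n^{9/4}=n^{-2\alpha/3}$, hence $\delta_n=n^{-8\alpha/27}$ and the claimed rate. The ingredient missing from both of your stages is thus the same one: systematic use of half-power factorizations of cross-covariance operators to shave powers of the regularization parameter off the naive operator-norm bounds.
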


It is possible to extend the covariance operator $C_{WW}$ to  one defined on $L^2(Q_\cY)$ by
\begin{equation}\label{eq:cov_op_L2}
    \tilde{C}_{WW}\phi = \int k_\cY(y,w)\phi(w) dQ_\cY(w), \qquad (\phi\in L^2(Q_\cY)).
\end{equation}
If we consider the convergence on average over $y$, we have a slightly better rate on the consistency of the KBR estimator in $L^2(Q_\cY)$.

\begin{thm}\label{thm:consitency_KBR_2a}
Let $f$ be a function in $\Hx$, $(Z,W)$ be a random vector on $\cX\times\cY$ such that the distribution is $Q$ with p.d.f.~$p(y|x)\pi(x)$, and $\widehat{m}_\Pi^{(\ell_n)}$ be an
estimator of $m_\Pi$ such that $\|\widehat{m}_\Pi^{(\ell_n)} -
m_\Pi\|_\Hx = O_p(n^{-\alpha})$ as $n\to\infty$ for some $0<\alpha \leq 1/2$.
Assume that $\pi/p_X \in
\mathcal{R}(C_{XX}^{1/2})$, where $p_X$ is the p.d.f.~of $P_X$, and
$E[f(Z)|W=\cdot]\in \mathcal{R}(\tilde{C}_{WW}^2)$.     For the regularization constants
$\eps_n=n^{-\frac{2}{3}\alpha}$ and $\delta_n = n^{- \frac{1}{3}\alpha}$, we have
\[
     \bigl\| \mathbf{f}_X^T R_{X|Y}\mathbf{k}_Y(W) - E[f(Z)|W]\bigr\|_{L^2(Q_\cY)}
     = O_p(n^{-\frac{1}{3}\alpha}), \quad (n\to\infty).
\]
\end{thm}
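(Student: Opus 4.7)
The plan is a bias--variance decomposition adapted to the $L^2(Q_\cY)$ norm, exploiting two structural advantages over the pointwise case of Theorem \ref{thm:consitency_KBR_1a}: (i) the canonical inclusion $\iota:\Hy\hookrightarrow L^2(Q_\cY)$ satisfies $\iota^*\iota=C_{WW}$, so that $\|u\|_{L^2(Q_\cY)}=\|C_{WW}^{1/2}u\|_\Hy$ for $u\in\Hy$, giving an extra $C_{WW}^{1/2}$ damping factor when passing from the $\Hy$-error to $L^2$; and (ii) the stronger source condition $\phi^*:=E[f(Z)|W=\cdot]\in\mathcal{R}(\tilde{C}_{WW}^2)$ yields $C_{WZ}f=C_{WW}^2\tilde{g}$ for some $\tilde{g}\in\Hy$, supplying two more factors of $C_{WW}$ on the target side. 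Together they allow the Tikhonov-regularisation powers to be cancelled more aggressively than in the pointwise case.

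Concretely, by Proposition \ref{prop:KBR_Gram} the estimator equals $\hat{h}(y)$ with $\hat{h}:=(\widehat{C}_{WW}^2+\delta_n I)^{-1}\widehat{C}_{WW}\widehat{C}_{WZ}f\in\Hy$. Introduce the population regularised object $h^*:=(C_{WW}^2+\delta_n I)^{-1}C_{WW}C_{WZ}f\in\Hy$ and split $\|\hat{h}-\phi^*\|_{L^2(Q_\cY)}\leq\|\hat{h}-h^*\|_{L^2(Q_\cY)}+\|h^*-\phi^*\|_{L^2(Q_\cY)}$. To justify the two observations above, introduce $S:L^2(Q_\cY)\to\Hy$, $S\phi=\int k_\cY(\cdot,w)\phi(w)\,dQ_\cY(w)$, so that $\iota=S^*$, $C_{WW}=SS^*$, and $\tilde{C}_{WW}=S^*S$; then $C_{WZ}f=S\phi^*$ is an $L^2$-valued extension of Theorem \ref{thm:cond_mean} applied to $(Z,W)\sim Q$ that does not require $\phi^*\in\Hy$, and $\phi^*=\tilde{C}_{WW}^2 h$ gives $C_{WZ}f=(SS^*)^2 Sh=C_{WW}^2(Sh)$.

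For the bias, a short spectral calculation gives $\iota h^*=(\tilde{C}_{WW}^2+\delta_n I)^{-1}\tilde{C}_{WW}^2\phi^*$, whence $\iota h^*-\phi^*=-\delta_n(\tilde{C}_{WW}^2+\delta_n I)^{-1}\phi^*$; using $\phi^*=\tilde{C}_{WW}^2 h$ and $\sup_\lambda\lambda^2/(\lambda^2+\delta_n)\leq 1$, its $L^2$-norm is bounded by $\delta_n\|h\|_{L^2(Q_\cY)}=O(\delta_n)$. For the variance, write $\|\iota(\hat{h}-h^*)\|_{L^2(Q_\cY)}=\|C_{WW}^{1/2}(\hat{h}-h^*)\|_\Hy$ and decompose
$$\hat{h}-h^*=\hat{A}(\widehat{C}_{WZ}-C_{WZ})f+(\hat{A}-A)C_{WZ}f,$$
with $A=(C_{WW}^2+\delta_n I)^{-1}C_{WW}$ and $\hat{A}=(\widehat{C}_{WW}^2+\delta_n I)^{-1}\widehat{C}_{WW}$. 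Substituting $C_{WZ}f=C_{WW}^2\tilde{g}$ in the second term and expanding $\hat{A}-A$ through the resolvent identity so that $\widehat{C}_{WW}-C_{WW}$ is flanked by Tikhonov resolvents, the available $C_{WW}$-powers combine with $C_{WW}^{1/2}$ to reduce each scalar modulus of continuity to a uniformly-damped form $\|C_{WW}^\nu(C_{WW}^2+\delta_n I)^{-1}\|$ with $\nu\geq 3/2$, of order $O(\delta_n^{-1/4})$. Combined with the empirical-operator rates $\|\widehat{C}_{WW}-C_{WW}\|,\|\widehat{C}_{WZ}-C_{WZ}\|=O_p(n^{-\alpha/3})$ (which, under $\pi/p_X\in\mathcal{R}(C_{XX}^{1/2})$ and $\eps_n=n^{-2\alpha/3}$, come from the same Tikhonov bias--variance analysis of the prior-perturbed regularised operator as in Theorem \ref{thm:consitency_KBR_1a}), this yields variance $O_p(n^{-\alpha/3})$; taking $\delta_n=n^{-\alpha/3}$ matches it with the bias.

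The main obstacle is this variance accounting: the $C_{WW}$-factors (two from the source condition, one-half from the $L^2$ bridge) must be distributed across the three parts of the resolvent expansion of $\hat{A}-A$ and across the first error term so that no piece incurs the pointwise $\delta_n^{-1/2}$ penalty, and one must replace $\widehat{C}_{WW}^{1/2}$-weighted bounds by $C_{WW}^{1/2}$-weighted ones without rate loss, e.g.~via the operator square-root perturbation inequality $\|A^{1/2}-B^{1/2}\|\leq\|A-B\|^{1/2}$ absorbed against a source-condition factor, or via a commutator expansion that trades $\widehat{C}_{WW}^{1/2}-C_{WW}^{1/2}$ for easier $\widehat{C}_{WW}-C_{WW}$-pieces.
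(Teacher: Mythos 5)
Your architecture is the same as the paper's: a two-stage analysis in which the prior-weighted first stage yields operator-norm convergence of $\widehat{C}_{WZ}$ and $\widehat{C}_{WW}$, followed by a bias--variance split of the Tikhonov-regularised conditioning step, with the $L^2(Q_\cY)$ norm converted to $\|C_{WW}^{1/2}\,\cdot\,\|_{\Hy}$ via the inclusion and the bias handled through the intertwining of $C_{WW}=SS^*$ and $\tilde{C}_{WW}=S^*S$ --- exactly the content of Theorems \ref{thm:transition_consistency1} and \ref{thm:consitency_conditioning2} in the paper. Your bias bound $O(\delta_n)$ is correct and coincides with the paper's $O(\delta_n^{\min\{1,\nu/2\}})$ at $\nu=2$.

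The variance accounting, however, does not close as written. First, under $\pi/p_X\in\mathcal{R}(C_{XX}^{1/2})$ and $\eps_n=n^{-2\alpha/3}$ the first-stage rates are $\|\widehat{C}_{WW}-C_{WW}\|=O_p(n^{-2\alpha/3})$, not $O_p(n^{-\alpha/3})$: the estimation part is $O_p(n^{-\alpha}\eps_n^{-1/2})=O_p(n^{-2\alpha/3})$ and the approximation part is $O(\eps_n^{(2\beta+1)/2})=O(\eps_n)=O(n^{-2\alpha/3})$ at $\beta=1/2$ (Theorem \ref{thm:transition_consistency1} with $\gamma=\tfrac{2}{3}\alpha$). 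Second, the claim that every term of the resolvent expansion can be damped to a modulus $O(\delta_n^{-1/4})$ is not justified: in the piece where $\widehat{C}_{WW}-C_{WW}$ (or $\widehat{C}_{WZ}-C_{WZ}$) sits between $C_{WW}^{1/2}$ on one side and the resolvent $(\widehat{C}_{WW}^2+\delta_n I)^{-1}$ on the other, the factors do not commute and no source-condition power is available to absorb the resolvent, so the honest bound is $O(\delta_n^{-1/2})$ at best --- and the square-root perturbation inequality you invoke to trade $C_{WW}^{1/2}$ for $\widehat{C}_{WW}^{1/2}$ requires positivity, which $\widehat{C}_{WW}$ (built from possibly negative weights $\hmu_i$) need not satisfy. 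With your stated numbers the variance is $O_p(n^{-\alpha/3}\cdot\delta_n^{-1/4})=O_p(n^{-\alpha/4})$, which does not yield the theorem. The result is rescued by the correct inputs: the paper bounds the estimation error in $L^2(Q_\cY)$ by $O_p(n^{-\gamma}\delta_n^{-1})$ with $\gamma=\tfrac{2}{3}\alpha$, which at $\delta_n=n^{-\alpha/3}$ equals $O_p(n^{-\alpha/3})$ and matches the bias.
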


The condition $\pi/p_X \in
\mathcal{R}(C_{XX}^{1/2})$ requires the prior to be sufficiently smooth.
If $\widehat{m}^{(\ell_n)}_\Pi$ is a direct empirical mean with an i.i.d.~sample of size $n$ from $\Pi$, typically $\alpha=1/2$, with which the theorems imply $n^{4/27}$-consistency for every $y$, and $n^{1/6}$-consistency in the $L^2(Q_\cY)$ sense.  While these might seem to be slow rates, the rate of convergence can in practice be much faster than the above theoretical guarantees.


\section{Bayesian inference with Kernel Bayes' Rule}
\label{sec:KBRmethods}

\subsection{Applications of Kernel Bayes' Rule}
\label{sec:KBR_appl}

In Bayesian inference, we are usually interested in finding a point estimate such as the MAP solution, the expectation of a function under the posterior, or other properties of the distribution.
Given that KBR provides a posterior estimate in the form of a kernel mean (which uniquely determines the distribution when a characteristic kernel is used), we now describe how our kernel approach applies to problems in Bayesian inference.

First,  we have already seen that a consistent estimator  for the expectation of $f\in\Hx$ can be defined with respect to the posterior.  On the other hand, unless $f\in \Hx$ holds, there is no theoretical guarantee that it gives a good estimate.  In Section \ref{sec:exp_posterior}, we discuss some experimental results in such situations.

To obtain a point estimate of the posterior on $x$, it is proposed in \cite{Song_etal_ICML2009}  to use the preimage $\widehat{x}=\arg\min_x\| k_\cX(\cdot,x) - {\bf k}_X^T R_{X|Y} {\bf k}_Y(y)\|^2_{\Hx}$, which represents the posterior mean most effectively by one point.   We use this approach in the present paper when point estimates are considered.  In the case of the Gaussian kernel $\exp(-\|x-y\|^2/(2\sigma^2))$, the fixed point method
\[
    x^{(t+1)} = \frac{\sum_{i=1}^n X_i \rho_i  \exp(-\|X_i-x^{(t)}\|^2/(2\sigma^2))}
    {\sum_{i=1}^n  \rho_i  \exp(-\|X_i-x^{(t)} |^2/(2\sigma^2))},
\]
where $\rho=R_{X|Y} {\bf k}_Y(y)$, can be used
to optimize $x$ sequentially \cite{Mika99kernelpca}.  This method usually converges very fast, although no theoretical guarantee exists for the convergence to the globally optimal point, as is usual in non-convex optimization.

A notable property of KBR is that the prior and likelihood are represented in terms of samples.  Thus, unlike many approaches to Bayesian inference, precise knowledge of the prior and likelihood distributions is not needed, once samples are obtained.  The following are typical situations where the KBR approach is advantageous:
\begin{itemize}
\item The probabilistic relation among variables is difficult to realize with a simple parametric model, while we can obtain samples of the variables easily.  We will see such an example in Section \ref{sec:filtering}.
\item The probability density function of the prior and/or likelihood is hard to obtain explicitly, but sampling is possible:
    \begin{itemize}
    \item In the field of population genetics, Bayesian inference is used with a likelihood expressed by branching processes to model the split of species, for which the explicit density is hard to obtain.   Approximate Bayesian Computation (ABC) is a popular method for approximately sampling from a posterior  without knowing the functional form \citep{Tavare_etal_1997ABC,Marjoram_etal_2003PNAS,Sisson_etal2007}.
    \item
    Another interesting application along these lines is nonparametric Bayesian inference (\cite{MullerQuintana2004} and references therein), in which the prior is typically given in the form of a process without a density form.  In this case, sampling methods are often applied (\cite{MacEachern1994,West_etal1994,MacEachern_etal1999} among others).  Alternatively, the posterior may be approximated using variational methods \cite{BleiJordan2006}.
    \end{itemize}
We will present an experimental comparison of KBR and ABC in Section \ref{sec:experiment_ABC}.

\item Even if explicit forms for the likelihood and prior are available, and standard sampling methods such as MCMC or sequential MC are applicable, the computation of a posterior estimate given $y$ might still be computationally costly, making real-time applications unfeasible.  Using KBR, however,  the expectation of a function of the posterior given different $y$ is obtained simply by taking the inner product as in \eq{eq:KBRemp_f}, once $\mathbf{f}_X^T R_{X|Y}$ has been computed.
\end{itemize}

\subsection{Discussions concerning implementation}

When implementing KBR, a number of factors should be borne in mind to ensure good performance. First, in common with many nonparametric approaches, KBR requires training data in the region of the new ``test'' points for results to be meaningful. In other words, if the point on which we condition appears in a region far from the sample used for the estimation, the posterior estimator will be unreliable.


Second, in computing the posterior in KBR, Gram matrix inversion is
necessary, which  would cost $O(n^3)$ for sample size $n$ if attempted directly.
Substantial cost reductions can be achieved if the Gram matrices are approximated by low
rank matrix approximations. A popular choice is  the incomplete Cholesky
decomposition \cite{FineScheinberg2001}, which approximates a Gram matrix in the form of $\Gamma\Gamma^T$ with $n\times r$ matrix $\Gamma$ ($r\ll n$) at cost $O(nr^2)$.
Using this and the Woodbury identity, the KBR can be approximately computed at cost $O(nr^2)$.

Third, kernel choice or model selection is key to the effectiveness of any kernel method.
In the case of KBR, we have three model parameters: the kernel (or its parameter, e.g. the bandwidth), the regularization parameter $\eps_n$, and $\delta_n$.  The strategy for parameter selection depends on how the posterior is to be used in the inference problem.  If it is to be applied in regression, we can use standard cross-validation.  In the filtering experiments in Section \ref{sec:experiments}, we use a validation method where we divide the training sample in two.

A more general model selection approach can also be formulated, by creating a new regression problem for the purpose.  Suppose the prior $\Pi$ is given by the marginal $P_X$ of $P$.  The posterior ${Q}_{\cX|y}$ averaged with respect to $P_Y$ is then equal to the marginal $P_X$ itself.  We are thus able to compare the discrepancy of the empirical kernel mean of $P_X$ and the average of the estimators $\widehat{m}_{Q_{\cX|y=Y_i}}$ over $Y_i$.  This leads to a $K$-fold cross validation approach: for a partition of $\{1,\ldots,n\}$ into $K$ disjoint subsets $\{T_a\}_{a=1}^K$, let $\widehat{m}_{Q_{\cX|y}}^{[-a]}$ be the kernel mean of posterior computed using Gram matrices on data $\{(X_i,Y_i)\}_{i\notin T_a}$, and based on the prior mean $\widehat{m}_X^{[-a]}$ with data $\{X_i\}_{i\notin T_a}$.  We can then cross validate by minimizing $\sum_{a=1}^K  \bigl\| \frac{1}{|T_a|} \sum_{j\in T_a} \widehat{m}_{Q_{\cX|y=Y_j}}^{[-a]} - \widehat{m}_X^{[a]}\bigr\|^2_{\Hx}$, where $\widehat{m}_X^{[a]}=\frac{1}{|T_a|} \sum_{j\in T_a}k_\cX(\cdot,X_j)$.

\subsection{Application to nonparametric state-space model}
\label{sec:filtering}

We next describe how KBR may be used in a particular application:  namely, inference in a general time invariant state-space model,
\[
    p(X,Y) = \pi(X_1)\prod_{t=1}^T p(Y_t|X_t) \prod_{t=1}^{T-1} q(X_{t+1}|X_t),
\]
where $Y_t$ is an observable variable, and $X_t$ is a hidden state
variable.  We begin with a brief  review of  alternative strategies for inference in state-space models with complex dynamics, for which linear models are not suitable.
 The extended Kalman filter (EKF) and unscented Kalman filter (UKF, \cite{JulierUhlmann1997}) are nonlinear extensions of the standard linear Kalman filter, and are well established in this setting.
Alternatively,  nonparametric estimates of conditional density functions can be employed, including kernel density estimation or distribution estimates on a partitioning of the space \cite{Monbet_etal2008,Thurn_etal_ICML1999}.  The latter nonparametric approaches are  effective only for low-dimensional cases, however.  Most relevant to this paper are \cite{Song_etal_ICML2009} and \cite{Song_etal_ICML2010}, in which the kernel means and covariance operators are used to implement the nonparametric HMM.

In this paper, we apply the KBR for inference in the nonparametric state-space model.
We do not assume the conditional
probabilities $p(Y_t|X_t)$ and $q(X_{t+1}|X_t)$ to be known explicitly, nor do we estimate them with
simple parametric models. Rather, we assume a sample
$(X_1,Y_1), \ldots, (X_{T+1},Y_{T+1})$ is given for both the
observable and hidden variables in the training phase.
The conditional probability for observation process $p(y|x)$ and the transition $q(x_{t+1}|x_t)$ are
represented by the empirical covariance operators as computed on the training sample,
\begin{align}\label{eq:cor_op_seq}
    & \widehat{C}_{XY}  = \frac{1}{T}\sum_{i=1}^T k_\cX(\cdot,X_i)\otimes k_\cY(\cdot,Y_i),  \quad
    \widehat{C}_{X_{+1}X}  = \frac{1}{T}\sum_{i=1}^T k_\cX(\cdot,X_{i+1})\otimes k_\cX(\cdot,X_i),  \\
    & \widehat{C}_{YY}  = \frac{1}{T}\sum_{i=1}^T k_\cY(\cdot,Y_i)\otimes k_\cY(\cdot,Y_i), \quad
    \widehat{C}_{XX}  = \frac{1}{T}\sum_{i=1}^T k_\cX(\cdot,X_i)\otimes k_\cX(\cdot,X_i). \nonumber
\end{align}

While the sample is not i.i.d., we can use the empirical covariances, which are consistent by the mixing property of Markov models.

Typical applications of the state-space model are filtering, prediction, and smoothing, which are defined by the estimation of $p(x_s|y_1,\ldots,y_t)$ for $s=t$, $s>t$, and $s<t$, respectively. Using the KBR, any of these can be computed.  For simplicity we explain the filtering problem in this paper, but the remaining cases are similar.  In filtering, given new observations $\ty_1,\ldots,\ty_t$, we wish to estimate the current hidden state $x_t$. The sequential estimate for the kernel mean of
$p(x_t|\ty_1,\ldots,\ty_t)$ can be derived via KBR.
Suppose we already have an estimator of the kernel mean of $p(x_t|\ty_1,\ldots,\ty_t)$ in the form
\[
\widehat{m}_{x_t|\ty_1,\ldots,\ty_t}
=  \sum_{i=1}^T \alpha_i^{(t)} k_\cX(\cdot,X_i),
\]
where $\alpha_i^{(t)}=\alpha_i^{(t)}(\ty_1,\ldots,\ty_t)$ are the coefficients at time $t$.

From $p(x_{t+1}| \ty_1,\ldots,\ty_t) = \int p(x_{t+1}|x_t)p(x_t|\ty_1,\ldots,\ty_t)dx_t$, Theorem \ref{thm:cond_prob_op} tells us  the kernel mean of $x_{t+1}$ given $\ty_1,\ldots,\ty_t$ is estimated by $\widehat{m}_{x_{t+1}|\ty_1,\ldots,\ty_t}=\widehat{C}_{X_{+1}X}(\widehat{C}_{XX}+\eps_T I)^{-1} \widehat{m}_{x_t|\ty_1,\ldots,\ty_t} = {\bf k}_{X_{+1}}^T (G_X + T\eps_T I_T)^{-1} G_X \alpha^{(t)}$, where ${\bf k}_{X_{+1}}^T=(k_\cX(\cdot,X_2),\ldots,k_\cX(\cdot,X_{T+1}))$.  Applying Theorem \ref{thm:cond_prob_op} again with $p(y_{t+1}| \ty_1,\ldots,\ty_t) = \int p(y_{t+1}|x_{t+1})p(x_{t+1}|\ty_1,\ldots,\ty_t)dx_t$, we have an estimate for
the kernel mean of the prediction $p(y_{t+1}|\ty_1,\ldots,\ty_t)$,
\begin{equation*}
    \widehat{m}_{y_{t+1}|\ty_1,\ldots,\ty_t} = \widehat{C}_{YX} (\widehat{C}_{XX}+\eps_T I)^{-1} \widehat{m}_{x_{t+1}|\ty_1,\ldots,\ty_t} \\
    = \sum_{i=1}^T \hmu^{(t+1)}_i k_\cY(\cdot,Y_i),
\end{equation*}
where the coefficients $\hmu^{(t+1)}=(\hmu^{(t+1)}_i)_{i=1}^T$ are given by
\begin{equation}\label{eq:update_filter1}
\hmu^{(t+1)}=\bigl(G_X+T\eps_T I_T\bigr)^{-1}G_{X X_{+1}}
\bigl(G_X + T\eps_T I_T\bigr)^{-1} G_X \alpha^{(t)}.
\end{equation}
Here $G_{X X_{+1}}$ is the ``transfer" matrix defined by
$\bigl( G_{XX_{+1}}\bigr)_{ij} = k_\cX(X_{i},X_{j+1})$.  From  $p(x_{t+1}|\ty_1,\ldots,\ty_{t+1})=\frac{p(y_{t+1}|x_{t+1})p(x_{t+1}|\ty_1,\ldots,\ty_{t})}
{ \int p(y_{t+1}|x_{t+1})p(x_{t+1}|\ty_1,\ldots,\ty_{t})dx_{t+1}}$,   kernel Bayes' rule with the prior $p(x_{t+1}|\ty_1,\ldots,\ty_{t})$ and the likelihood $p(y_{t+1}|x_{t+1})$ yields
\begin{equation}\label{eq:update_filter2}
    \alpha^{(t+1)}
    = \Lambda^{(t+1)}G_Y\bigl((\Lambda^{(t+1)}G_Y)^2 +\delta_T I_T\bigr)^{-1}\Lambda^{(t+1)}
    {\bf k}_Y(\ty_{t+1}),
\end{equation}
where
$\Lambda^{(t+1)} = {\rm
diag}(\hmu^{(t+1)}_1,\ldots,\hmu^{(t+1)}_T)$.
Eqs. (\ref{eq:update_filter1}) and (\ref{eq:update_filter2}) describe
the update rule of $\alpha^{(t)}(\ty_1,\ldots,\ty_t)$.

If the prior $\pi(x_1)$ is available, the posterior estimate at $x_1$ given $\ty_1$ is obtained by the kernel Bayes' rule.
If not, we may use \eq{eq:cond_mean_x} to get an  initial estimate  $\widehat{C}_{XY}(\widehat{C}_{YY} + \eps_n I)^{-1} k_\cY(\cdot,\ty_1)$, yielding
$\alpha^{(1)}(\ty_1) =  T(G_Y+T\eps_T I_T)^{-1}{\bf k}_Y(\ty_1)$.

In sequential filtering, a substantial reduction in  computational cost
can be achieved by low
rank matrix approximations, as discussed above.
Given an approximation of rank $r$ for the Gram matrices and transfer matrix,
and
employing the Woodbury identity, the computation costs just
$O(Tr^2)$ for each time step.

\subsection{Bayesian computation without likelihood}
\label{sec:ABC}

We  next address the setting where the likelihood is not known in analytic form, but sampling is possible.  In this case, Approximate Bayesian Computation (ABC) is a popular method for Bayesian inference.
The simplest form of ABC, which is called the rejection method, generates a sample from $q(Z|W=y)$ as follows: (i) generate a sample $X_t$ from the prior $\Pi$,
(ii) generate a sample $Y_t$ from $P(Y|X_t)$,
(iii) if $D(y,Y_t) < \tau$, accept $X_t$; otherwise reject,
(iv) go to (i).
In step (iii), $D$ is a distance measure of the space $\cX$, and $\tau$ is tolerance to acceptance.

In the same setting as ABC,
  KBR gives the following sampling-based method for computing the kernel posterior mean:
\begin{enumerate}
\item Generate a sample $X_1,\ldots,X_n$ from the prior $\Pi$.
\item Generate a sample $Y_t$ from $P(Y|X_t)$ ($t=1,\ldots,n$).
\item Compute Gram matrices $G_X$ and $G_Y$ with $(X_1,Y_1),\ldots,(X_n,Y_n)$, and $R_{X|Y}{\bf k}_Y(y)$.
\end{enumerate}
Alternatively, since $(X_t,Y_t)$ is an sample from $Q$, it is possible to use \eq{eq:cond_mean_x}
for the kernel mean of the conditional probability $q(x|y)$.  As in \cite{Song_etal_ICML2009}, the estimator is given by
\[
    \sum_{t=1}^n \nu_j k_\cX(\cdot,X_t),\quad \nu = (G_Y + N\eps_N I_N)^{-1} {\bf k}_Y(y).
\]

The distribution of a sample generated by ABC approaches to the true posterior if $\tau$ goes to zero, while  empirical estimates via  the kernel approaches converge to the true posterior mean in the limit of infinite sample size.
The efficiency of ABC, however, can be arbitrarily poor for  small $\tau$, since a sample $X_t$ is then rarely accepted in Step (iii).

The ABC method generates a sample, hence any statistics based on the posterior can be approximated.
Given a posterior mean obtained by one of the  kernel methods, however, we may only obtain expectations of functions in the RKHS, meaning that certain statistics (such as confidence intervals) are not straightforward to obtain.
In Section \ref{sec:experiment_ABC}, we present an experimental evaluation of the trade-off between computation time and accuracy for ABC and KBR.

\section{Numerical Examples}
\label{sec:experiments}

\subsection{Nonparametric inference of posterior}
\label{sec:exp_posterior}

The first numerical example is a comparison between KBR and a  kernel density estimation (KDE) approach to obtaining conditional densities.  Let $(X_1,Y_1),\ldots,(X_n,Y_n)$ be an i.i.d.~sample from $P$ on $\R^d\times\R^r$.  With probability density functions $K^\cX(x)$ on $\R^d$ and $K^\cY(y)$ on $\R^r$, the conditional probability density function $p(y|x)$ is estimated by
\[
    \widehat{p}(y|x) = \frac{\sum_{j=1}^n K^\cX_{h_X}(x-X_j) K^\cY_{h_Y}(y-Y_j)}{\sum_{j=1}^n K^\cX_h(x-X_j)},
\]
where $K^\cX_{h_X}(x)=h_X^{-d}K^\cX(x/h_X)$ and $K^\cY_{h_Y}(x)=h_Y^{-r}K^\cY(y/h_Y)$ ($h_X,h_Y>0$).
Given an i.i.d.~sample $U_1,\ldots,U_\ell$ from the prior $\Pi$, the particle representation of the posterior can be obtained by importance weighting (IW). Using this scheme, the posterior $q(x|y)$ given $y\in\R^r$ is represented by the weighted sample $(U_i,\zeta_i)$ with $\zeta_i=\widehat{p}(y|U_i)/\sum_{j=1}^\ell \widehat{p}(y|U_j)$.

We compare the estimates of $\int x q(x|y)dx$ obtained by KBR and KDE + IW, using Gaussian kernels for both the methods.   Note that  the function $f(x)=x$ does not belong to the Gaussian kernel RKHS,
 and the consistency of  KBR is not rigorously guaranteed for this function (c.f.
 Theorem \ref{thm:consitency_KBR_1a}).  That said, Gaussian kernels are known to be able to approximate any continuous function on a compact subset of the Euclidean space with arbitrary accuracy \cite{Steinwart01}.  With such kernels, we can expect the posterior mean to be approximated with high accuracy on any compact set, and thus on average.
In our experiments, the dimensionality was given by $r=d$ ranging from 2 to 64.
The distribution $P$ of $(X,Y)$ was $N((0, {\bf 1}_d^T)^T, V)$ with $V=A^T
A+2 I_d$, where ${\bf 1}_d=(1,\ldots,1)^T\in\R^d$ and each component of $A$ was randomly generated as $N(0,1)$ for each run.  The prior
$\Pi$ was $P_X=N(0,V_{XX}/2)$, where $V_{XX}$ is the $X$-component of $V$.  The
sample sizes were $n = \ell = 200$.  The bandwidth parameters $h_X,h_Y$ in KDE
were set $h_X=h_Y$, and chosen over the set $\{2*i\mid
i=1,\ldots,10\}$ in two ways: least square cross-validation \cite{Rudemo1982,Bowman1984} and the best mean performance.  For the KBR, we chose $\sigma$ in $e^{-\|x-x'\|^2/(2\sigma^2)}$ in two ways: the median over
the pairwise distances in the data \cite{Gretton_etal_nips07}, and the 10-fold cross-validation approach described in Section \ref{sec:KBR_appl}.  Figure \ref{BayesRule} shows the
mean square errors (MSE) of the estimates over 1000 random points
$y\sim N(0,V_{YY})$.
 KBR significantly  outperforms the KDE+IW approach. Unsurprisingly, the  MSE of both methods increases with dimensionality.

\begin{figure}
  \centering
  \includegraphics[width=70mm]{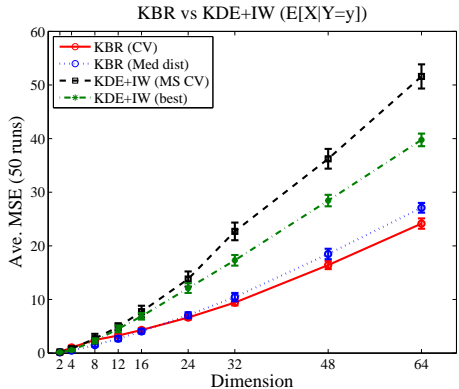}\\
  \caption{Comparison between KBR and KDE+IW.}\label{BayesRule}
\end{figure}

\subsection{Bayesian computation without likelihood}
\label{sec:experiment_ABC}

We compare  ABC and the kernel methods, KBR and conditional mean, in terms of  estimation accuracy and computational time, since they have an obvious tradeoff.  To compute the estimation accuracy rigorously, the ground truth is needed: thus we use Gaussian distributions for the true prior and likelihood, which makes the posterior easy to compute in closed form.
The samples are taken from the same model used in Section \ref{sec:exp_posterior}, and $\int x q(x|y)dx$ is evaluated at 10 different points of $y$. We performed 10 random runs with different random generation of the true distributions.

For ABC, we used only the rejection method; while there are more advanced sampling schemes \cite{Marjoram_etal_2003PNAS,Sisson_etal2007}, their implementation is dependent on the problem being solved.
 Various values for the acceptance region $\tau$ are used, and the accuracy and computational time are shown in Fig.~\ref{fig:ABC} together with total sizes of the generated samples.
For the kernel methods, the sample size $n$ is varied.
 The regularization parameters are given by $\eps_n = 0.01/n$ and $\delta_n = 2\eps_n$ for KBR, and $\eps_n=0.01/\sqrt{n}$ for the conditional kernel mean.  The kernels in the kernel methods are Gaussian kernels for which the bandwidth parameters are chosen by the median of the pairwise distances on the data (\cite{Gretton_etal_nips07}).  The
incomplete Cholesky decomposition is employed for the low-rank approximation.  The results indicate that  kernel methods achieve more accurate results than ABC at a given  computational cost, and the conditional kernel mean shows better results.

\begin{figure}
  \centering
    \includegraphics[width=59mm]{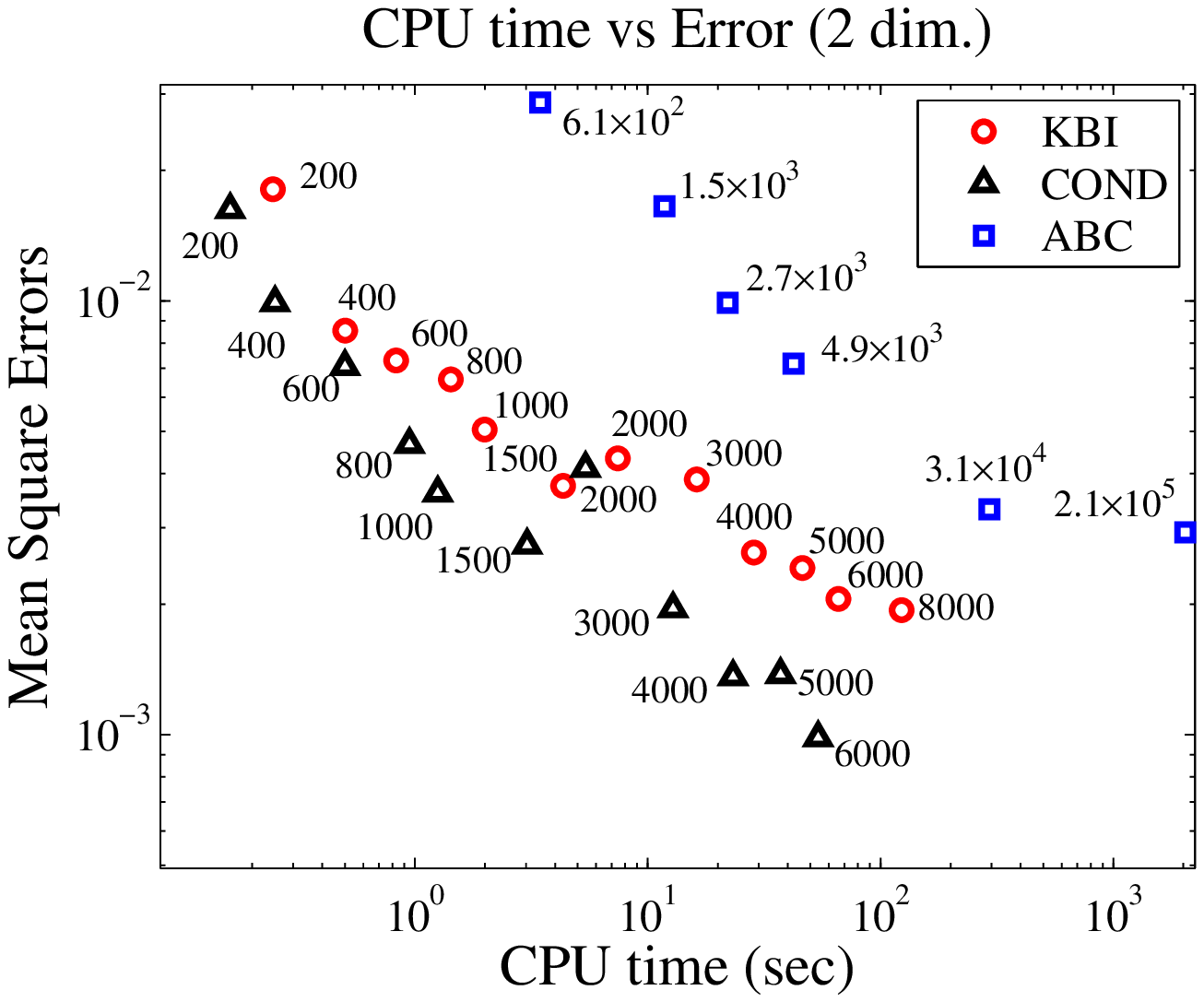}
    \includegraphics[width=60mm]{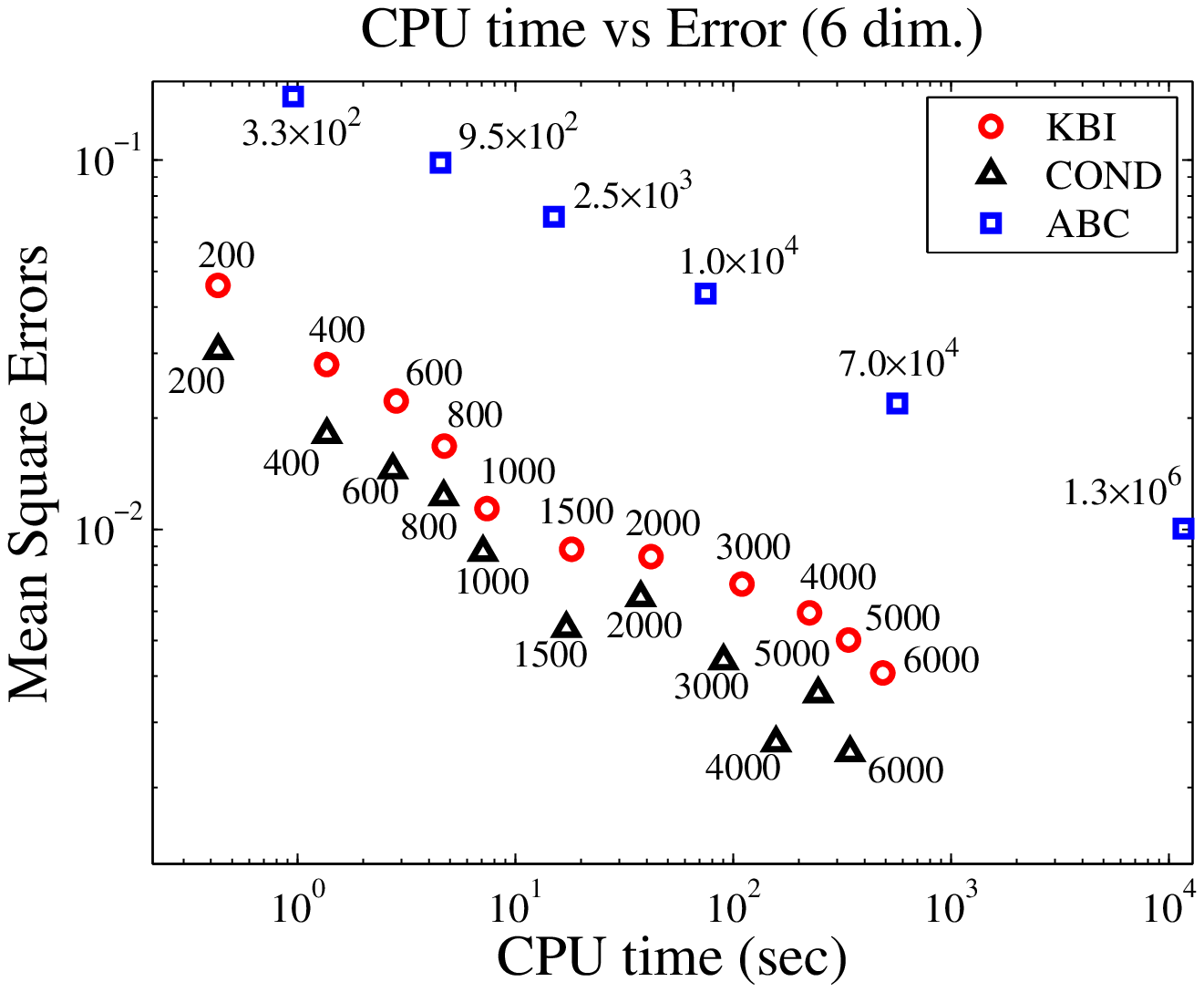}
  \caption{Comparison of estimation accuracy and computational time with KBR and ABC for Bayesian computation without likelihood. The numbers at the marks are the sample sizes generated for computation.}\label{fig:ABC}
\end{figure}

\subsection{Filtering problems}
We next compare the KBR filtering method (proposed in Section \ref{sec:filtering})  with EKF and UKF on synthetic data.

KBR has the regularization parameters
$\eps_T, \delta_T$, and kernel parameters for
$k_\cX$ and $k_\cY$ (e.g., the bandwidth parameter for an RBF kernel).  Under the assumption that a training sample is available, cross-validation can be performed on the training sample to select the parameters. By dividing the training sample into two, one half is used to estimate the covariance operators
\eq{eq:cor_op_seq} with a candidate parameter set, and
the other half to evaluate the estimation errors.
To reduce the search space and attendant computational cost,
we used a simpler procedure, setting $\delta_T = 2\eps_T$, and
using the Gaussian kernel bandwidths $\beta
\sigma_\cX$ and $\beta \sigma_\cY$, where $\sigma_\cX$ and $\sigma_\cY$ are
the median of pairwise distances in the training samples
 (\cite{Gretton_etal_nips07}).
This leaves only two parameters $\beta$ and $\eps_T$ to be tuned.

We applied the KBR filtering algorithm from Section \ref{sec:filtering} to two  synthetic data sets: a simple nonlinear dynamical system, in which the degree of nonlinearity can be controlled, and
the problem of camera orientation recovery from an image sequence.
In the first case, the hidden state is $X_t=(u_t,v_t)^T\in\R^2$,
and the dynamics are given by
\[
 \begin{pmatrix}u_{t+1}\\ v_{t+1}\end{pmatrix} = (1+b\sin(M\theta_{t+1}))\begin{pmatrix}\cos\theta_{t+1} \\ \sin\theta_{t+1}\end{pmatrix} + \zeta_t,
 \quad \theta_{t+1}=\theta_t + \eta\;\;(\text{mod }2\pi),
\]
where
$\eta>0$ is an increment of the angle and $\zeta_t\sim N(0, \sigma_h^2 I_2)$ is independent process noise.  Note that the dynamics of $(u_t,v_t)$ are nonlinear even for $b=0$.  The observation $Y_t$ follows
\[
Y_t = (u_t, v_t)^T + \xi_t,\qquad \xi_t\sim N(0,\sigma_o^2 I),
\]
where $\xi_t$ is independent noise.  The two 
dynamics are defined as follows.  (a) (rotation with noisy observation) $\eta=0.3$, $b=0$, $\sigma_h = \sigma_o=0.2$.
 (b) (oscillatory rotation with noisy observation) $\eta=0.4$, $b=0.4$, $M=8$, $\sigma_h = \sigma_o=0.2$.  (See Fig.\ref{fig:data}).

We assume the correct dynamics are known to the
EKF and UKF.
The results are shown in
Fig.~\ref{fig:filtering}.  In all the cases, EKF and UKF show unrecognizably small difference.  The dynamics in (a) are weakly nonlinear, and KBR has slightly worse MSE than EKF and UKF. For dataset (b), which has
strong nonlinearity,  KBR outperforms the nonlinear Kalman filter for $T\geq 200$.

\begin{figure}[t]
  \centering
  \includegraphics[height=4.5cm]{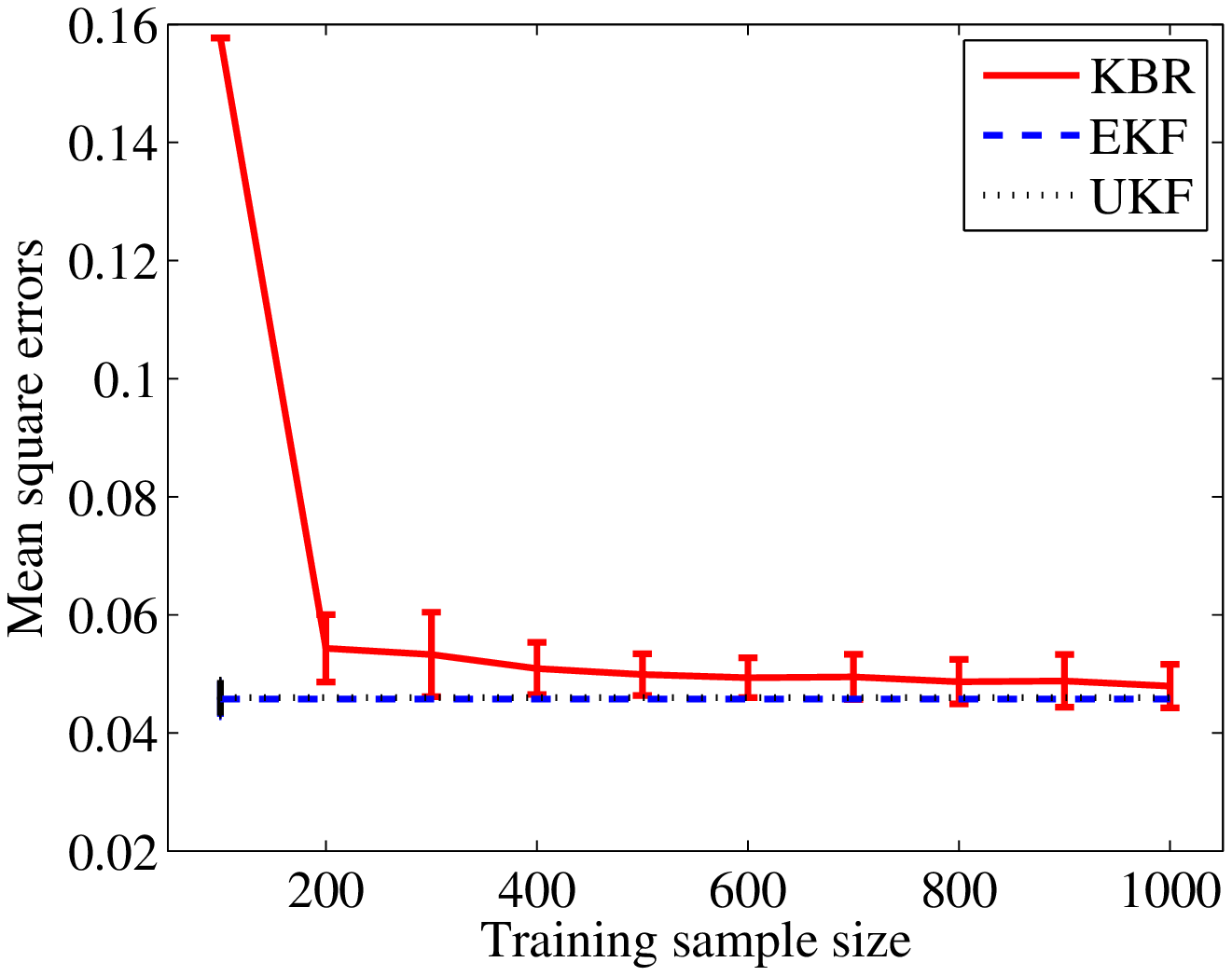}
  \includegraphics[height=4.5cm]{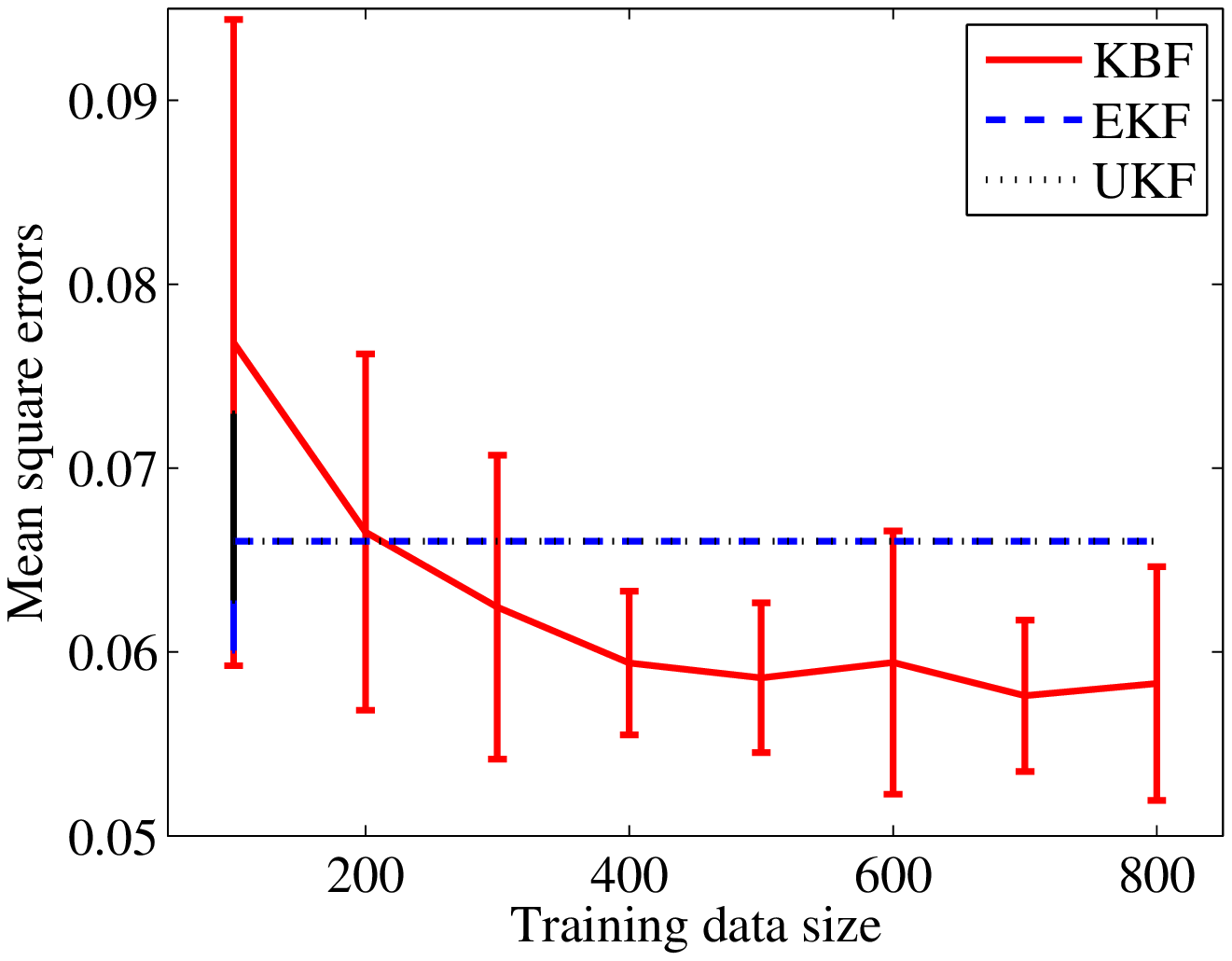}
\\
  Data (a)\hspace*{5cm}
  Data (b)
  \caption{Comparisons with the KBR Filter and EKF. (Average MSEs and standard errors over 30 runs.)
  }\label{fig:filtering}
\end{figure}

\begin{figure}[t]
    \centering
  \includegraphics[height=4cm]{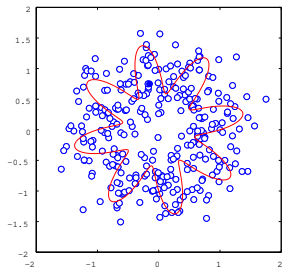}
  \caption{Example of data (b) ($X_t$, $N = 300$)}\label{fig:data}
\end{figure}

In our second synthetic example, we applied the KBR filter to the camera rotation problem used in Song et al. \cite{Song_etal_ICML2009}.  The angle of a camera, which is located at a fixed position, is a hidden variable, and  movie frames recorded  by the camera are observed.  The data are generated virtually using a computer graphics environment.  As in \cite{Song_etal_ICML2009}, we are given 3600 downsampled frames of $20\times 20$ RGB pixels ($Y_t\in[0,1]^{1200}$), where the first 1800 frames are used for training, and the second half are used to test the filter.  We make the data noisy by adding Gaussian noise $N(0,\sigma^2)$ to $Y_t$.

Our experiments cover two settings.  In the first, we assume we do not know that the hidden state $S_t$ is included in $SO(3)$, but only that it is a general $3\times 3$ matrix.  In this case, we use the Kalman filter by estimating the relations under a linear assumption, and the KBR filter with Gaussian kernels for $S_t$ and $X_t$ as Euclidean vectors.  In the second setting, we exploit the fact that $S_t\in SO(3)$: for the Kalman Filter, $S_t$ is represented by a quanternion, which is a standard vector representation of rotations; for the KBR filter the kernel $k(A,B)={\rm Tr}[AB^T]$ is used for $S_t$, and $S_t$ is estimated within $SO(3)$. Table \ref{tbl:camera} shows the Frobenius norms between the estimated matrix and the true one.
The KBR filter significantly outperforms the EKF, since KBR has the advantage in extracting the complex nonlinear dependence between the observation and  the hidden state.

\begin{table}
  \centering
    \begin{tabular}{c|cc|cc}
       \hline
         & KBR (Gauss) & KBR (Tr) & Kalman (9 dim.) &  Kalman (Quat.)  \\
       \hline
       $\sigma^2=10^{-4}$ & $0.210\pm 0.015$ & $0.146\pm 0.003$  & $1.980\pm0.083$ & $0.557\pm 0.023$ \\
       $\sigma^2=10^{-3}$ & $0.222\pm 0.009$ & $0.210\pm 0.008$ & $1.935\pm0.064$ &  $0.541\pm 0.022$ \\
       \hline
     \end{tabular}
  \caption{Average MSE and standard errors of estimating camera angles (10 runs).}\label{tbl:camera}
\end{table}

\section{Proofs}
\label{sec:proof}

The proof idea for the consistency rates of the KBR estimators is similar to
\cite{CaponnettoDeVito2007,SmaleZhou2005}, in which the basic techniques are taken from the general
theory of regularization \cite{EnglHankeNeubauer}.

The first preliminary result is a rate of convergence for the mean transition in Theorem \ref{thm:cond_prob_op}. In the following $\mathcal{R}(C_{XX}^0)$ means $\Hx$.

\begin{thm}\label{thm:transition_consistency1}
Assume that $\pi/p_X \in
\mathcal{R}(C_{XX}^\beta)$ for some $\beta\geq 0$, where $\pi$ and $p_X$ are the p.d.f.~of $\Pi$ and $P_X$, respectively.
Let $\widehat{m}_\Pi^{(n)}$ be an
estimator of $m_\Pi$ such that $\|\widehat{m}_\Pi^{(n)} -
m_\Pi\|_\Hx = O_p(n^{-\alpha})$ as $n\to\infty$ for some $0<\alpha \leq 1/2$.  Then, with $\eps_n=n^{-\max\{\frac{2}{3}\alpha, \frac{\alpha}{1+\beta}\}}$, we have
\[
    \bigl\| \hC_{YX}\bigl( \hC_{XX}+\eps_n I\bigr)^{-1} \widehat{m}_\Pi^{(n)}
    - m_{Q_\cY} \bigr\|_\Hy = O_p(n^{-\min\{\frac{2}{3}\alpha, \frac{2\beta+1}{2\beta+2}\alpha\}}),\quad (n\to\infty).
\]
\end{thm}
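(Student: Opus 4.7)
The plan is to reduce the estimator error to a bias--variance trade-off governed by the source condition, and then optimize over $\eps_n$. Set $f^\ast := \pi/p_X$: the assumption $\pi/p_X \in \mathcal{R}(C_{XX}^\beta)$ gives $f^\ast = C_{XX}^\beta h$ for some $h \in \Hx$, while combining \eq{eq:cov_op_integ} with Theorem \ref{thm:cond_prob_op} yields the population identities $m_\Pi = C_{XX} f^\ast$ and $m_{Q_\cY} = C_{YX} f^\ast$. Inserting these and adding/subtracting produces the decomposition
\begin{align*}
&\widehat{C}_{YX}(\widehat{C}_{XX}+\eps_n I)^{-1}\widehat{m}_\Pi^{(n)} - m_{Q_\cY} \\
&\quad = \widehat{C}_{YX}(\widehat{C}_{XX}+\eps_n I)^{-1}\bigl(\widehat{m}_\Pi^{(n)} - m_\Pi\bigr) + \widehat{C}_{YX}(\widehat{C}_{XX}+\eps_n I)^{-1}(C_{XX}-\widehat{C}_{XX}) f^\ast \\
&\qquad - \eps_n\, \widehat{C}_{YX}(\widehat{C}_{XX}+\eps_n I)^{-1} f^\ast + (\widehat{C}_{YX} - C_{YX}) f^\ast,
\end{align*}
which I would bound term by term.

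The main analytic tool is Baker's factorization $C_{YX} = C_{YY}^{1/2} V_{YX} C_{XX}^{1/2}$, with $\|V_{YX}\| \leq 1$, and its empirical version $\widehat{C}_{YX} = \widehat{C}_{YY}^{1/2} \widehat{V}_{YX} \widehat{C}_{XX}^{1/2}$. Since $\|\widehat{C}_{YY}^{1/2}\|^2 \leq \sup_y k_\cY(y,y) < \infty$ by assumption (K), this gives the deterministic bound $\|\widehat{C}_{YX}(\widehat{C}_{XX}+\eps_n I)^{-1}\|_{\mathrm{op}} = O(\eps_n^{-1/2})$. Combined with the standard $\sqrt{n}$-consistency of empirical covariance operators in Hilbert--Schmidt norm and the assumed $n^{-\alpha}$-rate of $\widehat{m}_\Pi^{(n)}$, the first, second and fourth terms of the decomposition are respectively $O_p(\eps_n^{-1/2} n^{-\alpha})$, $O_p(\eps_n^{-1/2} n^{-1/2})$ and $O_p(n^{-1/2})$.

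The third (bias) term is the only one that feeds in the source exponent $\beta$. I would first replace $\widehat{C}_{YX}(\widehat{C}_{XX}+\eps_n I)^{-1}$ by $C_{YX}(C_{XX}+\eps_n I)^{-1}$ via the resolvent identity $(\widehat{C}_{XX}+\eps_n I)^{-1} - (C_{XX}+\eps_n I)^{-1} = (C_{XX}+\eps_n I)^{-1}(C_{XX}-\widehat{C}_{XX})(\widehat{C}_{XX}+\eps_n I)^{-1}$; this generates additional stochastic errors of order $O_p(n^{-1/2}) + O_p(\eps_n^{-1/2} n^{-1/2})$ once multiplied by $\eps_n$ and applied to the bounded vector $f^\ast$. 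What remains is the deterministic population bias $\eps_n C_{YX}(C_{XX}+\eps_n I)^{-1} f^\ast = \eps_n C_{YY}^{1/2} V_{YX} C_{XX}^{\beta+1/2}(C_{XX}+\eps_n I)^{-1} h$, which, by functional calculus on the self-adjoint $C_{XX}$, has norm at most $\|C_{YY}^{1/2}\|\,\|h\| \sup_{\lambda \in [0,\|C_{XX}\|]} \eps_n \lambda^{\beta+1/2}/(\lambda+\eps_n) = O\bigl(\eps_n^{\min(\beta+1/2,\,1)}\bigr)$.

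Since $\alpha \leq 1/2$, every stochastic remainder is dominated by $O_p(\eps_n^{-1/2} n^{-\alpha})$, and the total error is $O_p(\eps_n^{-1/2} n^{-\alpha}) + O(\eps_n^{\min(\beta+1/2,\,1)})$. Balancing these gives the optimal choice $\eps_n = n^{-\alpha/(1/2 + \min(\beta+1/2,\,1))} = n^{-\max((2/3)\alpha,\,\alpha/(1+\beta))}$ and the rate $n^{-\min((2/3)\alpha,\,(2\beta+1)\alpha/(2\beta+2))}$ claimed in the theorem. The main obstacle is the third term: the source condition is stated in terms of the population $C_{XX}$ while the regularizer uses $\widehat{C}_{XX}$, so spectral calculus cannot be applied directly; converting between them costs an extra factor $\eps_n^{-1/2}$, but the accompanying $\eps_n$ from the bias prefactor and the $n^{-1/2}$ from operator concentration together ensure this perturbation stays strictly below the dominant bias rate.
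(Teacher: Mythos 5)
Your proposal is correct and follows essentially the same route as the paper: a bias--variance split using the Baker factorization to get $\|\hC_{YX}(\hC_{XX}+\eps_n I)^{-1}\|=O_p(\eps_n^{-1/2})$, the $\sqrt{n}$-consistency of the empirical covariance operators, spectral calculus under the source condition $\pi/p_X=C_{XX}^\beta h$ to bound the bias by $O(\eps_n^{\min\{\beta+1/2,1\}})$, and a final balancing of $\eps_n^{-1/2}n^{-\alpha}$ against the bias. The only (immaterial) difference is bookkeeping: the paper pivots on the population quantity $C_{YX}(C_{XX}+\eps_n I)^{-1}m_\Pi$ from the outset, whereas you isolate the bias term with empirical operators and transfer to population operators afterwards via the resolvent identity, yielding the same remainder orders.
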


\begin{proof}
Take $\eta\in\Hx$ such that $\pi/p_X =C_{XX}^\beta\eta$.  Then, we have
\begin{equation}\label{eq:m_pi_integ}
m_\Pi
= \int k_\cX(\cdot,x)\frac{\pi(x)}{p_X(x)}p_X(x)d\nu_\cX(x) = C_{XX}^{\beta+1}\eta.
\end{equation}

First we show the rate of the estimation error:
\begin{equation}\label{eq:Qy_estimation}
\bigl\| \hC_{YX}\bigl( \hC_{XX}+\eps_n I\bigr)^{-1} \hm_\Pi
   - C_{YX}\bigl( C_{XX}+\eps_n I\bigr)^{-1} m_\Pi \bigr\|_\Hy
   =O_p\bigl(n^{-\alpha}\eps_n^{-1/2}\bigr),
\end{equation}
as $n\to\infty$.  By using  $B^{-1}-A^{-1} = B^{-1}(A-B)A^{-1}$  for any invertible operators $A$ and $B$, the left hand side of \eq{eq:Qy_estimation} is upper
bounded by
\begin{multline*}
\bigl\|\hC_{YX}\bigl( \hC_{XX}+\eps_n I\bigr)^{-1}
\bigl(\hm_\Pi - m_\Pi\bigr)\bigr\|_{\Hy} +\bigl\|  \bigl(\hC_{YX}-C_{YX}\bigr)
\bigl(C_{XX}+\eps_n I\bigr)^{-1}m_\Pi \bigr\|_{\Hy} \\
 +  \bigl\| \hC_{YX}\bigl( \hC_{XX}+\eps_n I\bigr)^{-1}
\bigl(C_{XX}-\hC_{XX}\bigr) \bigl(C_{XX}+\eps_n I\bigr)^{-1}m_\Pi \bigr\|_{\Hy}.
\end{multline*}
By the decomposition $\hC_{YX}=\widehat{C}_{YY}^{(n)1/2} \widehat{W}_{YX}^{(n)}\widehat{C}_{XX}^{(n)1/2}$ with $\|\widehat{W}_{YX}^{(n)}\|\leq 1$ \cite{Baker73}, we have $\|\hC_{YX}\bigl( \hC_{XX}+\eps_n I\bigr)^{-1}\|=O_p(\eps_n^{-1/2})$, which implies the first term is of $O_p(n^{-\alpha}\eps_n^{-1/2})$.  From the $\sqrt{n}$ consistency of the covariance operators and $m_\Pi=C_{XX}^{\beta+1}\eta$, a similar argument to the first term proves that the second and third terms are of the
order $O_p(n^{-1/2})$ and $O_p(n^{-1/2}\eps_n^{-1/2})$,
respectively, which means \eq{eq:Qy_estimation}.

Next, we show the rate for the approximation error
\begin{equation}\label{eq:Qy_approx}
\bigl\|C_{YX}\bigl( C_{XX}+\eps_n I\bigr)^{-1} m_\Pi - m_{Q_\cY}\bigr\|_\Hy
= O(\eps_n^{\min\{(1+2\beta)/2, 1\}})\qquad (n\to\infty).
\end{equation}
Let $C_{YX}=C_{YY}^{1/2}W_{YX}C_{XX}^{1/2}$ be the decomposition with $\|W_{YX}\|\leq 1$.
It follows from \eq{eq:m_pi_integ} and the relation
\[
    m_{Q_\cY} = \int\int k(\cdot,y)\frac{\pi(x)}{p_X(x)}p(x,y)
    d\nu_\cX(x)d\nu_\cY(y) = C_{YX}C_{XX}^\beta \eta
\]
that the left hand side of \eq{eq:Qy_approx} is upper bounded by
\begin{equation*}
 \| C_{YY}^{1/2}W_{YX} \|\,\|\bigl( C_{XX} + \eps_n I\bigr)^{-1} C_{XX}^{(2\beta+3)/2}\eta - C_{XX}^{(2\beta+1)/2}\eta\|_\Hx.
\end{equation*}
By the eigendecomposition $C_{XX}=\sum_i \lambda_i \phi_i \la \phi_i,\cdot\ra$, where $\{\lambda_i\}$ are the positive eigenvalues and $\{\phi_i\}$ are the corresponding unit eigenvectors, the expansion
\begin{align*}
 \bigl\|\bigl( C_{XX} + \eps_n I\bigr)^{-1} C_{XX}^{(2\beta+3)/2}\eta - C_{XX}^{(2\beta+1)/2}\eta\bigr\|_\Hx^2
= \sum_i \biggl( \frac{\eps_n \lambda_i^{(2\beta+1)/2}}{\lambda_i + \eps_n} \biggr)^2 \la \eta,\phi_i\ra^2
\end{align*}
holds. If $0\leq \beta < 1/2$, we have $\frac{ \eps_n\lambda_i^{(2\beta+1)/2} }{ \lambda_i+\eps_n }= \frac{ \lambda_i^{(2\beta+1)/2} }{ (\lambda_i+\eps_n)^{(2\beta+1)/2} } \frac{ \eps_n^{(1 - 2\beta)/2} }{ (\lambda_i+\eps_n)^{(1-2\beta)/2} }\eps_n^{(2\beta+1)/2}\leq \eps_n^{(2\beta+1)/2}$.   If $\beta \geq 1/2$, then $\frac{ \eps_n\lambda_i^{(2\beta+1)/2} }{ \lambda_i+\eps_n } \leq \|C_{XX}\|\eps_n$.  The dominated convergence theorem shows that the the above sum converges to zero of the order $O(\eps_n^{\min\{2\beta+1, 2\}})$ as $\eps_n\to0$.

From Eqs. (\ref{eq:Qy_estimation}) and (\ref{eq:Qy_approx}), the optimal order of $\eps_n$ and the optimal rate of consistency are  given as claimed.
\end{proof}

The following theorem shows the consistency rate of the estimator used in the conditioning step \eq{eq:KBR_population}.
\begin{thm}\label{thm:consitency_conditioning1}
Let $f$ be a function in $\Hx$, and $(Z,W)$ be a random variable taking values in $\cX\times\cY$.  Assume that $E[f(Z)|W=\cdot]\in \mathcal{R}(C_{WW}^\nu)$ for some $\nu \geq 0$, and $\hC_{WZ}:\Hx\to\Hy$ and $\hC_{WW}:\Hy\to\Hy$ be compact operators, which may not be positive definite, such that $\| \hC_{WZ}-C_{WZ}\|=O_p(n^{-\gamma})$ and $\| \hC_{WW}-C_{WW}\|=O_p(n^{-\gamma})$ for some $\gamma > 0$.  Then, for a positive sequence $\delta_n = n^{-\max\{ \frac{4}{9}\gamma,\frac{4}{2\nu+5}\gamma\}}$, we have as $n\to\infty$
\[
\bigl\| \hC_{WW}\bigl( (\hC_{WW})^2 + \delta_n I \bigr)^{-1} \hC_{WZ} f -E[f(X)|W=\cdot]\bigr\|_{\Hx} = O_p( n^{-\min\{\frac{4}{9}\gamma,\frac{2\nu}{2\nu+5}\gamma\}}).
\]
\end{thm}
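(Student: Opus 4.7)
The plan is to split the total error into a deterministic approximation (bias) term and a stochastic estimation (variance) term, bound each via spectral calculus on the population operator $C_{WW}$ and operator perturbation for the empirical objects, and then balance the two through the choice of $\delta_n$. Throughout, set $h := E[f(Z)|W=\cdot]$, and note that applying Theorem \ref{thm:cond_mean} with the roles of $X$ and $Y$ interchanged yields the identity $C_{WW}\,h = C_{WZ}\,f$.

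First I would write the decomposition
\begin{align*}
\hC_{WW}(\hC_{WW}^2+\delta_n I)^{-1}\hC_{WZ} f - h
 &= \bigl[C_{WW}(C_{WW}^2+\delta_n I)^{-1}C_{WZ} f - h\bigr] \\
 &\quad{}+ \bigl[\hC_{WW}(\hC_{WW}^2+\delta_n I)^{-1}\hC_{WZ} f - C_{WW}(C_{WW}^2+\delta_n I)^{-1}C_{WZ} f\bigr].
\end{align*}
The first bracket is the bias. Using $C_{WW} h = C_{WZ} f$ and $C_{WW}^2(C_{WW}^2+\delta_n I)^{-1} = I - \delta_n(C_{WW}^2+\delta_n I)^{-1}$, it equals $-\delta_n(C_{WW}^2+\delta_n I)^{-1}h$. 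Under the source condition $h = C_{WW}^\nu \eta$, a direct spectral estimate gives
\[
\bigl\|\delta_n(C_{WW}^2+\delta_n I)^{-1}C_{WW}^\nu \eta\bigr\| \;\le\; \|\eta\|\sup_{\lambda \ge 0}\frac{\delta_n \lambda^\nu}{\lambda^2+\delta_n} \;=\; O\bigl(\delta_n^{\min(\nu/2,\, 1)}\bigr),
\]
the saturation at $\nu = 2$ coming from the trivial bound $\delta_n\lambda^\nu/(\lambda^2+\delta_n) \le \|C_{WW}\|^{\nu-2}\delta_n$ once $\nu \ge 2$.

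For the second bracket (the variance), I would split once more as
\[
\hC_{WW}(\hC_{WW}^2+\delta_n I)^{-1}(\hC_{WZ}-C_{WZ})f + \bigl[\hC_{WW}(\hC_{WW}^2+\delta_n I)^{-1} - C_{WW}(C_{WW}^2+\delta_n I)^{-1}\bigr]C_{WZ} f,
\]
and substitute $C_{WZ}f = C_{WW}^{1+\nu}\eta$ in the second piece to push the $\nu$-dependence onto the population operator. The crucial estimate is the spectral bound
\[
\bigl\|\hC_{WW}(\hC_{WW}^2+\delta_n I)^{-1}\bigr\| \;\le\; \sup_{\lambda\in\R}\frac{|\lambda|}{\lambda^2+\delta_n} \;=\; \frac{1}{2\sqrt{\delta_n}},
\]
which is valid because $\hC_{WW}$, though possibly indefinite, is self-adjoint and its square is positive, so the spectral theorem still applies; this is the role played by the $\hC_{WW}^2$ in the regulariser. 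For the difference-of-inverses piece I would apply the resolvent identity $B^{-1}-A^{-1} = B^{-1}(A - B)A^{-1}$ with $A = C_{WW}^2+\delta_n I$ and $B = \hC_{WW}^2+\delta_n I$, expand $\hC_{WW}^2 - C_{WW}^2 = \hC_{WW}(\hC_{WW}-C_{WW}) + (\hC_{WW}-C_{WW})C_{WW}$, and reduce each resulting factor to a spectral expression $\sup_\lambda \lambda^k/(\lambda^2+\delta_n)$ applied to $\eta$. Combining with $\|\hC_{WW}-C_{WW}\|,\|\hC_{WZ}-C_{WZ}\| = O_p(n^{-\gamma})$ yields a variance bound of the form $O_p(n^{-\gamma}\delta_n^{-p})$ whose exponent is determined by the source power $\nu$.

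The final step is to balance the bias $O(\delta_n^{\min(\nu/2,\,1)})$ against this dominant variance contribution; setting them equal and solving for $\delta_n$ produces the prescribed $\delta_n = n^{-\max\{4\gamma/9,\, 4\gamma/(2\nu+5)\}}$, and substituting gives the stated rate $n^{-\min\{4\gamma/9,\, 2\nu\gamma/(2\nu+5)\}}$, the $4\gamma/9$ cap being precisely the balance at the saturation point $\nu = 2$. The main obstacle is the variance step: since $\hC_{WW}$ may have negative eigenvalues, non-integer powers $\hC_{WW}^\nu$ are unavailable, so every intermediate operator must be kept inside the $(\hC_{WW}^2+\delta_n I)^{-1}$ regulariser and the $\nu$-dependence transferred to the population side via the perturbation identities—this is what yields the $(2\nu+5)$-type exponents rather than a simpler Tikhonov rate.
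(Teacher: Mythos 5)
Your proof is correct and follows essentially the same skeleton as the paper's: the same bias/variance split, the same identity $C_{WW}E[f(Z)|W=\cdot]=C_{WZ}f$ from Theorem \ref{thm:cond_mean}, the same spectral bound $\|\hC_{WW}((\hC_{WW})^2+\delta_n I)^{-1}\|\le 1/(2\sqrt{\delta_n})$ for the self-adjoint but possibly indefinite empirical operator, the same resolvent identity for the difference of regularized inverses, and the same $O(\delta_n^{\min\{\nu/2,1\}})$ bias estimate. The one substantive difference is how you treat the factor $C_{WZ}f$ inside the variance terms: you substitute $C_{WZ}f=C_{WW}^{1+\nu}\eta$ and push the source condition into those terms, whereas the paper uses the source-free factorization $C_{WZ}=C_{WW}^{1/2}U_{WZ}C_{ZZ}^{1/2}$ with $\|U_{WZ}\|\le 1$, giving $\|(C_{WW}^2+\delta_n I)^{-1}C_{WZ}f\|=O(\delta_n^{-3/4})$ and hence a variance of order $n^{-\gamma}\delta_n^{-5/4}$ \emph{uniformly in} $\nu$; it is exactly this $\nu$-independent exponent $5/4$ that makes the prescribed $\delta_n=n^{-\max\{4\gamma/9,\,4\gamma/(2\nu+5)\}}$ the bias/variance balance point. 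Your substitution instead gives $\|(C_{WW}^2+\delta_n I)^{-1}C_{WW}^{1+\nu}\eta\|=O(\delta_n^{\min\{(\nu-1)/2,\,0\}})$, i.e.\ a \emph{sharper} variance bound (of order $n^{-\gamma}\delta_n^{-1/2+\min\{(\nu-1)/2,0\}}$, never worse than $n^{-\gamma}\delta_n^{-1}$), so your closing claim that ``setting them equal and solving for $\delta_n$ produces the prescribed $\delta_n$'' is not literally what your own bounds yield: equating them would give a different $\delta_n$ and a faster optimal rate, and the $4\gamma/9$ cap would not appear as the balance at $\nu=2$. This is an inconsistency in the narrative rather than a gap in the proof, since the theorem fixes $\delta_n$; with that choice your bias term already equals the claimed rate and your variance term is dominated by it, so the stated conclusion follows. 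To make the write-up self-consistent, either carry out your own balance honestly (and note you obtain a rate at least as good as claimed), or adopt the paper's $C_{WW}^{1/2}U_{WZ}C_{ZZ}^{1/2}$ factorization so that the prescribed $\delta_n$ genuinely emerges from the optimization.
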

\begin{proof}
Let $\eta\in\Hx$ such that $E[f(Z)|W=\cdot]=C_{WW}^\nu \eta$. First we show
\begin{multline}\label{eq:estim_order}
\bigl\| \hC_{WW}\bigl((\hC_{WW})^2+\delta_n I\bigr)^{-1} \hC_{WZ}f
- C_{WW} (C_{WW}^2+\delta_n I)^{-1}C_{WZ}f \bigr\|_\Hx  \\ =O_p(n^{-\gamma}\delta_n^{-5/4}).
\end{multline}
The left hand side of \eq{eq:estim_order} is upper bounded by
\begin{align*}
& \bigl\| \hC_{WW}\bigl((\hC_{WW})^2+\delta_n I\bigr)^{-1} (\hC_{WZ}-C_{WZ})f
\bigr\|_\Hy  \\
& +
\bigl\| (\hC_{WW}-C_{WW})(C_{WW}^2+\delta_n I)^{-1}C_{WZ}f
\bigr\|_\Hy   \\
& + \bigl\| \hC_{WW}( (\hC_{WW})^2+\delta_n I\bigr)^{-1} \bigl( (\hC_{WW})^2 - C_{WW}^2\bigr)
\bigl(C_{WW}^2+\delta_n I\bigr)^{-1} C_{WZ}f
\bigr\|_\Hy.
\end{align*}
Let $\hC_{WW}= \sum_i \lambda_i \phi_i\la \phi_i,\cdot\ra$ be the eigendecomposition, where $\{\phi_i\}$ is the unit eigenvectors and $\{\lambda_i\}$ is the corresponding eigenvalues. From $\bigl| \lambda_i/(\lambda_i^2 + \delta_n)\bigr| = 1/|\lambda_i + \delta_n/\lambda_i| \leq 1/(2\sqrt{|\lambda_i|}\sqrt{\delta_n/|\lambda_i|}) = 1/(2\sqrt{\delta_n})$, we have $\| \hC_{WW}\bigl((\hC_{WW})^2+\delta_n I\bigr)^{-1}\| \leq 1/(2\sqrt{\delta_n})$, and thus the first term of the above bound is of $O_p(n^{-\gamma}\delta_n^{-1/2})$.  A similar argument by the eigendecomposition of $C_{WW}$ combined with the decomposition  $C_{WZ}=C_{WW}^{1/2}U_{WZ}C_{ZZ}^{1/2}$ with $\| U_{WZ} \|\leq 1$ shows that the second term is of $O_p(n^{-\gamma}\delta_n^{-3/4})$. From the fact $\|(\hC_{WW})^2-C_{WW}^2\|\leq \| \hC_{WW}(\hC_{WW}-C_{WW})\| + \| (\hC_{WW}-C_{WW})C_{WW}\| = O_p(n^{-\gamma})$, the third term is of $O_p(n^{-\gamma}\delta_n^{-5/4})$.  This implies \eq{eq:estim_order}.

From $E[f(Z)|W=\cdot] = C_{WW}^\nu\eta$ and $C_{WZ}f = C_{WW}E[f(Z)|W=\cdot]=C_{WW}^{\nu+1}\eta$, the convergence rate
\begin{equation}\label{eq:app_order}
\bigl\| C_{WW} (C_{WW}^2+\delta_n I)^{-1}C_{WZ}f - E[f(Z)|W=\cdot]\bigr\|_\Hy = O(\delta_n^{\min\{1, \frac{\nu}{2} \}}).
\end{equation}
can be proved by the same way as \eq{eq:Qy_approx}.

Combination of Eqs.(\ref{eq:estim_order}) and (\ref{eq:app_order}) proves the assertion.
\end{proof}

Recall that $\tilde{C}_{WW}$ is the integral operator on $L^2(Q_\cY)$ defined by \eq{eq:cov_op_L2}.
The following theorem shows the consistency rate on average.
Here $\mathcal{R}(\tilde{C}_{WW}^0)$ means $L^2(Q_\cY)$.

\begin{thm}\label{thm:consitency_conditioning2}
Let $f$ be a function in $\Hx$, and $(Z,W)$ be a random variable taking values in $\cX\times\cY$ with distribution
$Q$.  Assume that $E[f(Z)|W=\cdot]\in \mathcal{R}(\tilde{C}_{WW}^\nu)\cap\Hy$ for some $\nu > 0$, and $\hC_{WZ}:\Hx\to\Hy$ and $\hC_{WW}:\Hy\to\Hy$ be compact operators, which may not be positive definite, such that $\| \hC_{WZ}-C_{WZ}\|=O_p(n^{-\gamma})$ and $\| \hC_{WW}-C_{WW}\|=O_p(n^{-\gamma})$ for some $\gamma > 0$.  Then, for a positive sequence $\delta_n = n^{-\max\{ \frac{1}{2}\gamma,\frac{2}{\nu+2}\gamma\}}$, we have as $n\to\infty$
\[
\bigl\| \hC_{WW}\bigl( (\hC_{WW})^2 + \delta_n I \bigr)^{-1} \hC_{WZ} f -E[f(X)|W=\cdot]\bigr\|_{L^2(Q_\cY)} = O_p( n^{-\min\{\frac{1}{2}\gamma,\frac{\nu}{\nu+2}\gamma\}}).
\]
\end{thm}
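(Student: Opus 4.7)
My plan is to mirror the proof of Theorem~\ref{thm:consitency_conditioning1} but to measure the error in $L^2(Q_\cY)$, which permits a sharper analysis because of the extra smoothing factor of $C_{WW}^{1/2}$ carried by this norm. The starting point is the canonical inclusion $S:\Hy\hookrightarrow L^2(Q_\cY)$, which satisfies $S^*S=C_{WW}$ on $\Hy$ and $SS^*=\tilde{C}_{WW}$ on $L^2(Q_\cY)$. Consequently $\|g\|_{L^2(Q_\cY)}=\|C_{WW}^{1/2}g\|_{\Hy}$ for every $g\in\Hy$, and $S$ intertwines functional calculi: $S\,\psi(C_{WW})=\psi(\tilde{C}_{WW})\,S$ for any Borel $\psi$.

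As in the previous proof I would split into estimation and approximation parts. For the approximation part, applying $S$ to $C_{WW}(C_{WW}^2+\delta_n I)^{-1}C_{WZ}f-E[f(Z)|W=\cdot]$ and using $SC_{WZ}f=\tilde{C}_{WW}\,SE[f(Z)|W=\cdot]$ together with intertwining gives
\[
-\delta_n\bigl(\tilde{C}_{WW}^2+\delta_n I\bigr)^{-1}\tilde{C}_{WW}^{\nu}\tilde\eta,
\]
where $\tilde\eta\in L^2(Q_\cY)$ is supplied by the source condition $SE[f(Z)|W=\cdot]=\tilde{C}_{WW}^\nu\tilde\eta$. A spectral bound on $\delta_n\lambda^{\nu}/(\lambda^2+\delta_n)$ over $\sigma(\tilde{C}_{WW})$ yields the approximation rate $O(\delta_n^{\min\{\nu/2,\,1\}})$, of the same form as in Theorem~\ref{thm:consitency_conditioning1}.

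The estimation part is where the gain appears. I would use the same three-way decomposition of $\hC_{WW}((\hC_{WW})^2+\delta_n I)^{-1}\hC_{WZ}f-C_{WW}(C_{WW}^2+\delta_n I)^{-1}C_{WZ}f$ as in Theorem~\ref{thm:consitency_conditioning1}, and measure each piece in $L^2(Q_\cY)$, i.e.~prepend $C_{WW}^{1/2}$ and measure in $\Hy$. The crucial sub-bound is
\[
\bigl\|C_{WW}^{1/2}\,\hC_{WW}\bigl((\hC_{WW})^2+\delta_n I\bigr)^{-1}\bigr\|=O(\delta_n^{-1/4}),
\]
an improvement over the $O(\delta_n^{-1/2})$ bound used in Theorem~\ref{thm:consitency_conditioning1}. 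To establish it I would set $|\hC_{WW}|:=((\hC_{WW})^2)^{1/2}$ (well-defined since $\hC_{WW}$ is self-adjoint) and write $C_{WW}^{1/2}=|\hC_{WW}|^{1/2}+(C_{WW}^{1/2}-|\hC_{WW}|^{1/2})$. The first piece commutes with $\hC_{WW}$, so spectral calculus gives $\||\hC_{WW}|^{1/2}\hC_{WW}((\hC_{WW})^2+\delta_n I)^{-1}\|=\sup_\lambda|\lambda|^{3/2}/(\lambda^2+\delta_n)=O(\delta_n^{-1/4})$. The remainder is controlled by operator Hölder continuity of the square root, $\|C_{WW}^{1/2}-|\hC_{WW}|^{1/2}\|\leq\|C_{WW}-|\hC_{WW}|\|^{1/2}\leq\|C_{WW}^2-(\hC_{WW})^2\|^{1/4}=O_p(n^{-\gamma/4})$, and contributes $O_p(n^{-\gamma/4}\delta_n^{-1/2})$, which is dominated by $\delta_n^{-1/4}$ at the $\delta_n$ ultimately chosen.

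Combining this with the standard bound $\|(C_{WW}^2+\delta_n I)^{-1}C_{WW}^{1/2}\|=O(\delta_n^{-3/4})$, the factorisation $C_{WZ}=C_{WW}^{1/2}U_{WZ}C_{ZZ}^{1/2}$ with $\|U_{WZ}\|\leq 1$, and $\|(\hC_{WW})^2-C_{WW}^2\|=O_p(n^{-\gamma})$, the three terms evaluate to $O_p(n^{-\gamma}\delta_n^{-1/4})$, $O_p(n^{-\gamma}\delta_n^{-3/4})$ and $O_p(n^{-\gamma}\delta_n^{-1})$, the third dominating. Balancing $n^{-\gamma}\delta_n^{-1}$ against $\delta_n^{\min\{\nu/2,1\}}$ selects $\delta_n=n^{-\max\{\gamma/2,\,2\gamma/(\nu+2)\}}$ and yields the claimed rate. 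The main obstacle I anticipate is verifying the operator-Hölder inequality for the square root in the possibly-indefinite but self-adjoint case via $|\hC_{WW}|$, and confirming that the lower-order correction $n^{-\gamma/4}\delta_n^{-1/2}$ is indeed absorbed into $\delta_n^{-1/4}$ at the optimal $\delta_n$ for every $\nu>0$.
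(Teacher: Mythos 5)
Your proposal is correct and follows essentially the same route as the paper's proof: both pass to the $\Hy$-norm with a $C_{WW}^{1/2}$ prefix (equivalently, work through the inclusion $S$ into $L^2(Q_\cY)$), reuse the three-term estimation/approximation decomposition of Theorem \ref{thm:consitency_conditioning1}, obtain $O_p(n^{-\gamma}\delta_n^{-1}) + O(\delta_n^{\min\{1,\nu/2\}})$, and balance. Your explicit derivation of $\|C_{WW}^{1/2}\hC_{WW}((\hC_{WW})^2+\delta_n I)^{-1}\| = O(\delta_n^{-1/4})$ via $|\hC_{WW}|^{1/2}$ and the operator square-root inequality is a welcome filling-in of the step the paper dispatches as ``a similar argument,'' and your check that the $O_p(n^{-\gamma/4}\delta_n^{-1/2})$ correction is absorbed (since $\delta_n \geq n^{-\gamma}$ for every $\nu>0$) is exactly what is needed to make that step rigorous.
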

\begin{proof}
Note that for $f,g\in\Hx$ we have $(f,g)_{L^2(Q_\cY)} = E[f(W)g(W)] = \la f,C_{WW}g\ra_\Hx$.  It follows that the left hand side of the assertion is equal to
\[
\bigl\| C_{WW}^{1/2}\hC_{WW}\bigl((\hC_{WW})^2+\delta_n I\bigr)^{-1} \hC_{WZ}f - C_{WW}^{1/2}E[f(Z)|W=\cdot]\bigr\|_{\Hy}.
\]

First, by the similar argument to the proof of \eq{eq:estim_order}, it is easy to show that the rate of the estimation error is given by
\begin{multline*}
\bigl\| C_{WW}^{1/2}\bigl\{ \hC_{WW}\bigl((\hC_{WW})^2+\delta_n I\bigr)^{-1} \hC_{WZ}f - C_{WW}(C_{WW}^2+\delta_n I)^{-1}C_{WZ}f \bigr\} \bigr\|_{\Hy} \\
=O_p(n^{-\gamma }\delta_n^{-1}).
\end{multline*}
%
It suffices then to prove
\begin{equation*}
\bigl\| C_{WW} (C_{WW}^2+\delta_n I)^{-1}C_{WZ}f - E[f(Z)|W=\cdot]\bigr\|_{L^2(Q_\cY)} = O(\delta_n^{\min\{1, \frac{\nu}{2} \}}).
\end{equation*}
Let $\xi\in L^2(Q_\cY)$ such that $E[f(Z)|W=\cdot]=\tilde{C}_{WW}^\nu \xi$.  In a similar way to Theorem \ref{thm:cond_mean}, $\tilde{C}_{WW}E[f(Z)|W] = \tilde{C}_{WZ}f$ holds, where $\tilde{C}_{WZ}$ is the extension of $C_{WZ}$, and thus $C_{WZ}f = \tilde{C}_{WW}^{\nu+1}\xi$.  The left hand side of the above equation is equal to
\[
    \bigl\| \tilde{C}_{WW} (\tilde{C}_{WW}^2+\delta_n I)^{-1}\tilde{C}_{WW}^{\nu+1}\xi - \tilde{C}_{WW}^{\nu}\xi\bigr\|_{L^2(Q_cY)}.
\]
By the eigendecomposition of $\tilde{C}_{WW}$ in $L^2(Q_\cY)$, a similar argument to the proof of \eq{eq:app_order} shows the assertion.
\end{proof}

The consistency of KBR follows by combining the above theorems.
\begin{thm}\label{thm:consitency_KBR1}
Let $f$ be a function in $\Hx$, $(Z,W)$ be a random variable that has the distribution $Q$ with p.d.f.~$p(y|x)\pi(x)$, and $\widehat{m}_\Pi^{(n)}$ be an
estimator of $m_\Pi$ such that $\|\widehat{m}_\Pi^{(n)} -
m_\Pi\|_\Hx = O_p(n^{-\alpha})$ ($n\to\infty$) for some $0<\alpha\leq 1/2$. Assume that $\pi/p_X \in
\mathcal{R}(C_{XX}^{\beta})$ with $\beta\geq 0$,  and  $E[f(Z)|W=\cdot]\in \mathcal{R}(C_{WW}^\nu)$ for some $\nu\geq0$.  For the regularization constants
$\eps_n=n^{-\max\{\frac{2}{3}\alpha,\frac{1}{1+\beta}\alpha\}}$ and $\delta_n = n^{-\max\{ \frac{4}{9}\gamma,\frac{4}{2\nu+5}\gamma\}}$, where $\gamma=\min\{ \frac{2}{3}\alpha,\frac{2\beta+1}{2\beta+2}\alpha\}$, we have for any $y\in\cY$
\[
     \mathbf{f}^T_X R_{X|Y}\mathbf{k}_Y(y) - E[f(Z)|W=y]
     = O_p(n^{-\min\{\frac{4}{9}\gamma,\frac{2\nu}{2\nu+5}\gamma\}}), \quad (n\to\infty),
\]
where $\mathbf{f}_X^T R_{X|Y}\mathbf{k}_Y(y)$ is given by \eq{eq:KBRemp}.
\end{thm}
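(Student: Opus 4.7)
The plan is to view the KBR estimator as a composition of two steps already controlled by the preceding theorems: first a ``prior-update'' step producing estimates of the joint operators $C_{ZW}$ and $C_{WW}$, and then a ``conditioning'' step producing $E[f(Z)|W=\cdot]$. By the reproducing property and the fact that $\widehat{C}_{WZ}=\widehat{C}_{ZW}^{*}$, the scalar
\[
\mathbf{f}_X^{T} R_{X|Y}\mathbf{k}_Y(y)
= \la f,\widehat{C}_{ZW}\bigl(\widehat{C}_{WW}^{2}+\delta_n I\bigr)^{-1}\widehat{C}_{WW}k_\cY(\cdot,y)\ra_\Hx
= \widehat{h}(y),
\]
where $\widehat{h}:=\widehat{C}_{WW}(\widehat{C}_{WW}^{2}+\delta_n I)^{-1}\widehat{C}_{WZ}f\in\Hy$. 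Thus the theorem will follow once we prove that $\widehat{h}\to E[f(Z)|W=\cdot]$ in $\Hy$-norm at the claimed rate, because pointwise evaluation is a bounded functional on $\Hy$ under assumption~(K): $|\widehat{h}(y)-E[f(Z)|W=y]|\leq \sqrt{k_\cY(y,y)}\,\|\widehat{h}-E[f(Z)|W=\cdot]\|_\Hy$.

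The first step is to invoke Theorem~\ref{thm:transition_consistency1} on the product RKHSs $\Hy\otimes\Hx$ and $\Hy\otimes\Hy$, corresponding to the product kernels $k_\cY k_\cX$ and $k_\cY k_\cY$. Since $\widehat{C}_{ZW}$ and $\widehat{C}_{WW}$ are identified with the kernel means $\widehat{m}_{(ZW)}=\widehat{C}_{(YX)X}(\widehat{C}_{XX}+\eps_n I)^{-1}\widehat{m}_\Pi^{(n)}$ and $\widehat{m}_{(WW)}=\widehat{C}_{(YY)X}(\widehat{C}_{XX}+\eps_n I)^{-1}\widehat{m}_\Pi^{(n)}$ respectively, and the hypothesis $\pi/p_X\in\mathcal{R}(C_{XX}^\beta)$ does not depend on the output space, Theorem~\ref{thm:transition_consistency1} with the prescribed $\eps_n=n^{-\max\{\frac{2}{3}\alpha,\frac{\alpha}{1+\beta}\}}$ gives
\[
\|\widehat{m}_{(ZW)}-m_{(ZW)}\|_{\Hy\otimes\Hx} = O_p(n^{-\gamma}),\qquad
\|\widehat{m}_{(WW)}-m_{(WW)}\|_{\Hy\otimes\Hy} = O_p(n^{-\gamma}),
\]
with $\gamma=\min\{\tfrac{2}{3}\alpha,\tfrac{2\beta+1}{2\beta+2}\alpha\}$. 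Because the Hilbert-Schmidt norm on the space of operators coincides with the RKHS norm on the corresponding tensor product, and the operator norm is dominated by the HS norm, this yields $\|\widehat{C}_{ZW}-C_{ZW}\|=O_p(n^{-\gamma})$ and $\|\widehat{C}_{WW}-C_{WW}\|=O_p(n^{-\gamma})$.

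The second step is to plug these rates into Theorem~\ref{thm:consitency_conditioning1} with the same value of $\gamma$, using the hypothesis $E[f(Z)|W=\cdot]\in\mathcal{R}(C_{WW}^\nu)$ and the chosen $\delta_n=n^{-\max\{\frac{4}{9}\gamma,\frac{4}{2\nu+5}\gamma\}}$. This delivers $\|\widehat{h}-E[f(Z)|W=\cdot]\|_\Hy=O_p(n^{-\min\{\frac{4}{9}\gamma,\frac{2\nu}{2\nu+5}\gamma\}})$, and combined with the pointwise-evaluation bound above completes the proof.

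The main obstacle I expect is the first step: one must verify that the ``lifted'' covariance operators $C_{(YX)X}$ and $C_{(YY)X}$ and the source condition $\pi/p_X\in\mathcal{R}(C_{XX}^\beta)$ satisfy the hypotheses of Theorem~\ref{thm:transition_consistency1} exactly, since the output space has been replaced by a tensor-product RKHS while the input-side covariance $C_{XX}$ and prior mean are unchanged. A careful bookkeeping of the Baker-type decomposition $C_{(YX)X}=C_{(YX)(YX)}^{1/2}W C_{XX}^{1/2}$ in the tensor product space, ensuring boundedness of the partial isometry $W$, is what makes the argument go through and transfer from the scalar case in Theorem~\ref{thm:transition_consistency1} to the operator-valued setting needed here.
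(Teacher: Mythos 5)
Your proposal is correct and follows essentially the same route as the paper's proof: apply Theorem \ref{thm:transition_consistency1} to the lifted variables $(Y,X)$ and $(Y,Y)$ to get $O_p(n^{-\gamma})$ operator-norm rates for $\widehat{C}_{WZ}$ and $\widehat{C}_{WW}$, then feed these into Theorem \ref{thm:consitency_conditioning1} and convert the resulting $\Hy$-norm bound to a pointwise one via the reproducing property. Your write-up is in fact somewhat more explicit than the paper's (which leaves the tensor-product identification and the boundedness of evaluation implicit), but the argument is identical in substance.
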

\begin{proof}
By applying Theorem \ref{thm:transition_consistency1} to
$Y=(Y,X)$ and $Y=(Y,Y)$, we see that both of $\|\widehat{C}_{WZ}-C_{WZ}\|$
and $\|\widehat{C}_{WW}-C_{WW}\|$ are of
$O_p(n^{-\gamma})$. Since
\begin{multline*}
    \mathbf{f}^T_X R_{X|Y}\mathbf{k}_Y(y) - E[f(Z)|W=y]  \\
    = \la k_\cY(\cdot,y), \widehat{C}_{WW}\bigl( (\widehat{C}_{YY})^2 + \delta_n I \bigr)^{-1} \widehat{C}_{WZ} f - E[f(Z)|W=\cdot]\ra_\Hy,
\end{multline*}
combination of Theorems \ref{thm:transition_consistency1} and \ref{thm:consitency_conditioning1} proves the theorem.
\end{proof}

The next theorem shows the rate on average w.r.t.~$Q_\cY$. The proof is similar to the above theorem, and omitted.
\begin{thm}\label{thm:consitency_KBR2}
Let $f$ be a function in $\Hx$, $(Z,W)$ be a random variable that has the distribution $Q$ with p.d.f.~$p(y|x)\pi(x)$, and $\widehat{m}_\Pi^{(n)}$ be an
estimator of $m_\Pi$ such that $\|\widehat{m}_\Pi^{(n)} -
m_\Pi\|_\Hx = O_p(n^{-\alpha})$ ($n\to\infty$) for some $0<\alpha \leq 1/2$. Assume that $\pi/p_X \in
\mathcal{R}(C_{XX}^{\beta})$ with $\beta\geq 0$,  and  $E[f(Z)|W=\cdot]\in \mathcal{R}(\tilde{C}_{WW}^\nu)\cap\Hy$ for some $\nu >0$.   For the regularization constants
$\eps_n=n^{-\max\{\frac{2}{3}\alpha,\frac{1}{1+\beta}\alpha\}}$ and $\delta_n = n^{-\max\{ \frac{1}{2}\gamma,\frac{2}{\nu+2}\gamma\}}$, where $\gamma=\min\{ \frac{2}{3}\alpha,\frac{2\beta+1}{2\beta+2}\alpha\}$, we have
\[
     \bigl\| \mathbf{f}_X^T R_{X|Y}\mathbf{k}_Y(W) - E[f(Z)|W]\bigr\|_{L^2(Q_\cY)}
     = O_p(n^{-\min\{\frac{1}{2}\gamma,\frac{\nu}{\nu+2}\gamma\}}), \quad (n\to\infty).
\]
\end{thm}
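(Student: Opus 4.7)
The plan is to mirror the proof of Theorem \ref{thm:consitency_KBR1}, substituting Theorem \ref{thm:consitency_conditioning2} (the $L^2(Q_\cY)$-level conditioning estimate) for Theorem \ref{thm:consitency_conditioning1} (its pointwise counterpart) at the final step. All the analytic machinery has already been assembled in Theorems \ref{thm:transition_consistency1} and \ref{thm:consitency_conditioning2}; the work consists of identifying the right operators and showing that the two estimates compose correctly.

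First I would rewrite the left-hand side as an $L^2(Q_\cY)$ distance between two $\Hy$-elements. By self-adjointness of $\widehat{C}_{WW}$, the reproducing property, and the defining formula \eq{eq:KBR} for $\widehat{m}_{Q_\cX|y}$, we have
$\mathbf{f}_X^T R_{X|Y}\mathbf{k}_Y(w) = \la f,\widehat{m}_{Q_\cX|w}\ra_{\Hx} = \la k_\cY(\cdot,w), h_n\ra_{\Hy} = h_n(w)$,
where $h_n := \widehat{C}_{WW}\bigl((\widehat{C}_{WW})^2+\delta_n I\bigr)^{-1}\widehat{C}_{WZ} f \in \Hy$.
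Thus the target is $\|h_n - E[f(Z)|W=\cdot]\|_{L^2(Q_\cY)}$, which is precisely the quantity controlled by Theorem \ref{thm:consitency_conditioning2}.

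Second, I would invoke Theorem \ref{thm:transition_consistency1} twice to obtain rates for $\widehat{C}_{WZ}$ and $\widehat{C}_{WW}$. Applying it with ``$Y$'' replaced by the tensor-valued variable $(Y,X)$ on $\cY\times\cX$ (using the product kernel $k_\cY k_\cX$), the estimator $\widehat{C}_{(YX)X}(\widehat{C}_{XX}+\eps_n I)^{-1}\widehat{m}_\Pi^{(n)}$ converges in the RKHS $\Hy\otimes\Hx$ to the kernel mean of $Q_{(YX)}$. Via the standard isometric identification of $\Hy\otimes\Hx$ with Hilbert--Schmidt operators, these are $\widehat{C}_{WZ}$ and $C_{WZ}$ respectively, so the theorem gives $\|\widehat{C}_{WZ}-C_{WZ}\|_{\mathrm{HS}} = O_p(n^{-\gamma})$; since $\|\cdot\|_{\mathrm{op}}\le\|\cdot\|_{\mathrm{HS}}$, the same rate holds in operator norm. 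Repeating with ``$Y$'' replaced by $(Y,Y)$ on the product kernel $k_\cY k_\cY$ gives $\|\widehat{C}_{WW}-C_{WW}\| = O_p(n^{-\gamma})$. Here $\gamma=\min\{\tfrac{2}{3}\alpha,\tfrac{2\beta+1}{2\beta+2}\alpha\}$ under the regularity $\pi/p_X\in\mathcal{R}(C_{XX}^\beta)$ and the choice $\eps_n = n^{-\max\{\frac{2}{3}\alpha,\frac{1}{1+\beta}\alpha\}}$.

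Third, I would feed these operator-norm rates into Theorem \ref{thm:consitency_conditioning2}, whose hypotheses $\|\widehat{C}_{WZ}-C_{WZ}\| = O_p(n^{-\gamma})$ and $\|\widehat{C}_{WW}-C_{WW}\| = O_p(n^{-\gamma})$ are now in hand, together with the smoothness assumption $E[f(Z)|W=\cdot]\in \mathcal{R}(\tilde{C}_{WW}^\nu)\cap\Hy$. The theorem then delivers, for $\delta_n = n^{-\max\{\frac{1}{2}\gamma,\frac{2}{\nu+2}\gamma\}}$, the rate $\|h_n - E[f(Z)|W=\cdot]\|_{L^2(Q_\cY)} = O_p(n^{-\min\{\frac{1}{2}\gamma,\frac{\nu}{\nu+2}\gamma\}})$, which is the claim. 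The main delicate point is the verification in the previous step: that applying Theorem \ref{thm:transition_consistency1} on the product kernels $k_\cY k_\cX$ and $k_\cY k_\cY$ legitimately controls $\widehat{C}_{WZ}$ and $\widehat{C}_{WW}$ in \emph{operator} norm with the same exponent $\gamma$. Once this identification through the tensor/Hilbert--Schmidt isomorphism is made explicit, the remaining assembly of the two rates is a direct computation parallel to Theorem \ref{thm:consitency_KBR1}, and so the detailed calculation can reasonably be omitted.
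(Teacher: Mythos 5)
Your proposal is correct and is exactly the argument the paper intends: it mirrors the proof of Theorem \ref{thm:consitency_KBR1}, using Theorem \ref{thm:transition_consistency1} on the product kernels to get the $O_p(n^{-\gamma})$ operator-norm rates for $\widehat{C}_{WZ}$ and $\widehat{C}_{WW}$, and then substituting Theorem \ref{thm:consitency_conditioning2} for Theorem \ref{thm:consitency_conditioning1}. The paper omits this proof as "similar," and your write-up (including the explicit Hilbert--Schmidt-to-operator-norm identification and the rewriting of the estimator as $h_n(w)$) supplies precisely the omitted details.
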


We  also have consistency of the estimator for the kernel mean of posterior $m_{Q_{\cX|y}}$, if we make stronger assumptions. First, we formulate the expectation with the posterior in terms of operators.  Let $(Z,W)$ be a random variable with distribution $Q$. Assume that for any $f\in\Hx$ the conditional expectation $E[f(Z)|W=\cdot]$ is included in $\Hy$.  We  then have a linear operator $S$ defined by
\[
S:\Hx\to\Hy, \qquad f\mapsto E[f(Z)|W=\cdot].
\]
If we further assume that $S$ is bounded, the adjoint operator $S^*:\Hy\to\Hx$ satisfies
\[
    \la S^*k_\cY(\cdot,y), f\ra_\Hx = \la k_\cY(\cdot,y), Sf\ra_\Hy = E[f(Z)|W=y]
\]
for any $y\in\cY$, and thus $S^*k_\cY(\cdot,y)$ is equal to the kernel mean of the conditional probability of $Z$ given $W=y$.

We make the following further assumptions:\\
{\bf Assumption (S)}
\begin{enumerate}
\item The covariance operator $C_{WW}$ is injective.
\item There exists $\nu> 0$ such that for any $f\in \Hx$ there is $\eta_f\in\Hx$ with  $Sf = C_{WW}^\nu\eta_f$, and the linear map
\[
    C_{WW}^{-\nu}S: \Hx\to\Hy,\qquad  f\mapsto \eta_f
\]
is bounded.
\end{enumerate}

\begin{thm}\label{thm:consitency_KBR3}
Let $(Z,W)$ be a random variable that has the distribution $Q$ with p.d.f.~$p(y|x)\pi(x)$, and $\widehat{m}_\Pi^{(n)}$ be an
estimator of $m_\Pi$ such that $\|\widehat{m}_\Pi^{(n)} -
m_\Pi\|_\Hx = O_p(n^{-\alpha})$ ($n\to\infty$) for some $0<\alpha \leq 1/2$.  Assume (S) above, and $\pi/p_X \in
\mathcal{R}(C_{XX}^{\beta})$ with some $\beta\geq 0$.  For the regularization constants
$\eps_n=n^{-\max\{\frac{2}{3}\alpha,\frac{1}{1+\beta}\alpha\}}$ and $\delta_n = n^{-\max\{ \frac{4}{9}\gamma,\frac{4}{2\nu+5}\gamma\}}$, where $\gamma=\min\{ \frac{2}{3}\alpha,\frac{2\beta+1}{2\beta+2}\alpha\}$, we have for any $y\in\cY$
\[
     \bigl\| \mathbf{k}_X^T R_{X|Y}\mathbf{k}_Y(y) - m_{Q_\cX|y} \bigr\|_{\Hx}
     = O_p(n^{-\min\{\frac{4}{9}\gamma,\frac{2\nu}{2\nu+5}\gamma\}}),
\]
as $n\to\infty$, where $m_{Q_\cX|y}$ is the kernel mean of the posterior given $y$.
\end{thm}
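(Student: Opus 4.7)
The plan is to reduce the statement to an operator-norm bound on a single map $\Hx\to\Hy$, so that Assumption (S) can supply the uniformity in $f$ that the pointwise bound of Theorem \ref{thm:consitency_KBR1} does not provide. Introduce $S:\Hx\to\Hy$ with $Sf=E[f(Z)|W=\cdot]$ and
\[
T_n:=\widehat{C}_{WW}\bigl(\widehat{C}_{WW}^2+\delta_n I\bigr)^{-1}\widehat{C}_{WZ}:\Hx\to\Hy.
\]
Taking adjoints, $S^{*}k_\cY(\cdot,y)=m_{Q_\cX|y}$ by the defining relation of $S$, and a direct computation gives $T_n^{*}k_\cY(\cdot,y)=\widehat{C}_{ZW}\bigl(\widehat{C}_{WW}^2+\delta_n I\bigr)^{-1}\widehat{C}_{WW}k_\cY(\cdot,y)=\widehat{m}_{Q_\cX|y}$. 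Since (K) implies $\sup_y\|k_\cY(\cdot,y)\|_\Hy<\infty$, it suffices to prove $\|T_n-S\|_{\mathrm{op}}=O_p(n^{-\min\{4\gamma/9,\,2\nu\gamma/(2\nu+5)\}})$ for the prescribed $\eps_n$ and $\delta_n$.

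The preparatory step is to apply Theorem \ref{thm:transition_consistency1} twice, to the augmented variables $((Y,X),X)$ and $((Y,Y),X)$ with the product kernels $k_\cY k_\cX$ on $\cY\times\cX$ and $k_\cY k_\cY$ on $\cY\times\cY$; via the tensor identifications $\widehat{C}_{ZW}\equiv\widehat{m}_{(ZW)}$ and $\widehat{C}_{WW}\equiv\widehat{m}_{(WW)}$, and using $\pi/p_X\in\mathcal{R}(C_{XX}^\beta)$ together with $\eps_n=n^{-\max\{\frac{2}{3}\alpha,\frac{1}{1+\beta}\alpha\}}$, this yields $\|\widehat{C}_{ZW}-C_{ZW}\|=O_p(n^{-\gamma})$ and $\|\widehat{C}_{WW}-C_{WW}\|=O_p(n^{-\gamma})$ with $\gamma=\min\{\tfrac{2}{3}\alpha,\tfrac{2\beta+1}{2\beta+2}\alpha\}$ (the tensor-space norm dominates the operator norm). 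Setting $T_\infty:=C_{WW}(C_{WW}^2+\delta_n I)^{-1}C_{WZ}$ and splitting $T_n-S=(T_n-T_\infty)+(T_\infty-S)$, the estimation error $\|T_n-T_\infty\|_{\mathrm{op}}$ is handled exactly as in the proof of \eq{eq:estim_order}: the resolvent identity $B^{-1}-A^{-1}=B^{-1}(A-B)A^{-1}$, the spectral bound $\|A(A^2+\delta_n I)^{-1}\|\leq 1/(2\sqrt{\delta_n})$, and $\|\widehat{C}_{WW}^2-C_{WW}^2\|=O_p(n^{-\gamma})$ together give $\|T_n-T_\infty\|_{\mathrm{op}}=O_p(n^{-\gamma}\delta_n^{-5/4})$.

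The crux, and the step that genuinely differs from Theorem \ref{thm:consitency_KBR1}, is the approximation error $\|T_\infty-S\|_{\mathrm{op}}$, where Assumption (S) is essential. By (S.2), $S=C_{WW}^\nu T$ with $T:=C_{WW}^{-\nu}S:\Hx\to\Hy$ bounded; applying Theorem \ref{thm:cond_mean} in the $(Z,W)$-world gives $C_{WZ}=C_{WW}S=C_{WW}^{\nu+1}T$, so
\[
T_\infty-S=\bigl\{C_{WW}^{\nu+2}(C_{WW}^2+\delta_n I)^{-1}-C_{WW}^\nu\bigr\}T=-\delta_n\,C_{WW}^\nu(C_{WW}^2+\delta_n I)^{-1}T.
\]
An eigendecomposition of $C_{WW}$, splitting the cases $\nu\leq 2$ and $\nu>2$ in the supremum $\sup_\lambda\lambda^\nu/(\lambda^2+\delta_n)$, then yields $\|T_\infty-S\|_{\mathrm{op}}=O(\delta_n^{\min\{1,\nu/2\}})$. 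Balancing $n^{-\gamma}\delta_n^{-5/4}$ against $\delta_n^{\min\{1,\nu/2\}}$ reproduces the prescribed $\delta_n=n^{-\max\{4\gamma/9,\,4\gamma/(2\nu+5)\}}$ and the claimed rate. The principal obstacle is exactly this uniformity in $f$: without the boundedness of $C_{WW}^{-\nu}S$ in Assumption (S), the norm of the representer $\eta_f$ in the pointwise source condition $Sf=C_{WW}^\nu\eta_f$ could grow with $f$, and one could not convert the pointwise bound of Theorem \ref{thm:consitency_KBR1} into an $\Hx$-norm estimate for $\widehat{m}_{Q_\cX|y}-m_{Q_\cX|y}$.
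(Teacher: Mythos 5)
Your proposal is correct and follows essentially the same route as the paper: the estimation error is bounded by $O_p(n^{-\gamma}\delta_n^{-5/4})$ exactly as in \eq{eq:estim_order} (after obtaining $\|\widehat{C}_{ZW}-C_{ZW}\|, \|\widehat{C}_{WW}-C_{WW}\| = O_p(n^{-\gamma})$ from Theorem \ref{thm:transition_consistency1}), and the approximation error is reduced via $C_{WZ}=C_{WW}S$ and Assumption (S) to $\delta_n\|(C_{WW}^2+\delta_n I)^{-1}C_{WW}^{\nu}\|\,\|C_{WW}^{-\nu}S\| = O(\delta_n^{\min\{1,\nu/2\}})$, which is precisely the paper's argument. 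Your only (harmless) presentational difference is phrasing both error terms as operator-norm bounds on $T_n-S$ from the outset, whereas the paper evaluates the estimation error pointwise at $k_\cY(\cdot,y)$ and passes to the operator norm only for the approximation error.
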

\begin{proof}
First, in a similar manner to the proof of \eq{eq:estim_order}, we have
\begin{multline*}
\bigl\| \hC_{ZW}\bigl((\hC_{WW})^2+\delta_n I\bigr)^{-1} \hC_{WW}k_\cY(\cdot,y)
- C_{ZW} (C_{WW}^2+\delta_n I)^{-1}C_{WW}k_\cY(\cdot,y) \bigr\|_\Hx  \\ =O_p(n^{-\gamma}\delta_n^{-5/4}).
\end{multline*}
The assertion is thus obtained if
\begin{equation}\label{eq:app_order3}
\bigl\| C_{ZW} (C_{WW}^2+\delta_n I)^{-1}C_{WW}k_\cY(\cdot,y) - S^*k_\cY(\cdot,y)\bigr\|_\Hx = O(\delta_n^{\min\{1, \frac{\nu}{2} \}})
\end{equation}
is proved. The left hand side of \eq{eq:app_order3} is upper-bounded by
\begin{multline*}
\bigl\| C_{ZW} (C_{WW}^2+\delta_n I)^{-1}C_{WW}- S^*\|\,\|k_\cY(\cdot,y)\|_\Hy  \\
= \bigl\| C_{WW}(C_{WW}^2+\delta_n I)^{-1}C_{WZ} -S\bigr\|\,\|k_\cY(\cdot,y)\|_\Hy.
\end{multline*}
It follows from Theorem \ref{thm:cond_mean} that $C_{WZ}=C_{WW}S$, and thus $\| C_{WW}(C_{WW}^2+\delta_n I)^{-1}C_{WZ} -S\| = \| C_{WW}(C_{WW}^2+\delta_n I)^{-1}C_{WW}S -S\|\leq
\delta_n\|(C_{WW}^2+\delta_n I)^{-1}C_{WW}^{\nu}\|\,\|C_{WW}^{-\nu} S\|$.  The eigendecomposition of $C_{WW}$ together with the inequality
$\frac{\delta_n \lambda^\nu}{\lambda^2 + \delta_n}\leq \delta_n^{\min\{1,\nu/2\}}$ ($\lambda\geq 0$) completes the proof.
\end{proof}

\if 0
\begin{lma}\label{lma:inv_coef}
Let ${\mathcal W}$ be a weighted empirical probability on $\cX\times \cY$ given by
\[
    {\mathcal W} = \sum_{i=1}^n w_i \delta_{(X_i,Y_i)}.
\]
Then,
\[
\widehat{C}_{\mathcal W}(\widehat{C}_{{\mathcal W}_\cX} + \eps I)^{-1} k_\cX(\cdot,x) = {\bf k}_Y^T \Lambda(G_X\Lambda+\eps
I_n)^{-1} {\bf k}_X(x),
\]
where
\[
{\bf k}_X = \begin{pmatrix}k_\cX(\cdot,X_1) \\ \vdots \\
    k_\cX(\cdot,X_n)\end{pmatrix},  \qquad
{\bf k}_Y = \begin{pmatrix}k_\cY(\cdot,Y_1) \\ \vdots \\
    k_\cX(\cdot,X_n)\end{pmatrix},
\]
and $\Lambda = {\rm diag}(w_1,\ldots,w_n)$.
\end{lma}
\begin{proof}
Let $h =(\widehat{C}_{{\mathcal W}_\cX} + \eps I)^{-1} k_\cX(\cdot,x)$, and
decompose $h$ as $h=\alpha^T {\bf k}_X + h_\perp$, where $h_\perp$ is orthogonal to the subspace spanned by
$\{k_\cX(\cdot,X_i)\}_{i=1}^n$.  Then,
\begin{equation}\label{eq:target}
    \hC_{\mathcal W}(\widehat{C}_{{\mathcal W}_\cX} + \eps I)^{-1} k(\cdot,x)
    = \hC_{\mathcal W} h = {\bf k}_Y^T \Lambda G_X\alpha.
\end{equation}
It follows from the definition of $h$ that
\begin{align*}
k_\cX(\cdot,x)
& =(\widehat{C}_{{\mathcal W}_\cX} + \eps I)\Bigl(\sum_{i=1}^n \alpha_i k_\cX(\cdot,X_i) + h_\perp\Bigr) \\
& =  {\bf k}_X^T (\Lambda G_X+\eps I_n)\alpha   + \eps h_\perp.
\end{align*}
By taking the inner products with $k_\cX(\cdot,X_i)$
($i=1,\ldots,n$), we have
\[
(G_X\Lambda +\eps I_n)G_X\alpha  = {\bf k}_X(x),
\]
which gives
\begin{equation}\label{eq:Ga}
  G_X\alpha = (G_X\Lambda +\eps I_n)^{-1} {\bf k}_X(x).
\end{equation}
The assertion follows from Eqs.(\ref{eq:target}) and (\ref{eq:Ga}).
\end{proof}

\fi



\subsubsection*{Acknowledgements}
We thank Arnaud Doucet, Lorenzo Rosasco, Yee Whye Teh and Shuhei Mano for their helpful comments.

\bibliographystyle{plainnat}
\bibliography{fukubib}

\begin{thebibliography}{41}
\providecommand{\natexlab}[1]{#1}
\providecommand{\url}[1]{\texttt{#1}}
\expandafter\ifx\csname urlstyle\endcsname\relax
  \providecommand{\doi}[1]{doi: #1}\else
  \providecommand{\doi}{doi: \begingroup \urlstyle{rm}\Url}\fi

\bibitem[Aronszajn(1950)]{Aronszajn50}
N.~Aronszajn.
\newblock Theory of reproducing kernels.
\newblock \emph{Transactions of the American Mathematical Society}, 68\penalty0
  (3):\penalty0 337--404, 1950.

\bibitem[Baker(1973)]{Baker73}
C.R. Baker.
\newblock Joint measures and cross-covariance operators.
\newblock \emph{Transactions of the American Mathematical Society},
  186:\penalty0 273--289, 1973.

\bibitem[Berlinet and Thomas-Agnan(2004)]{Berlinet_RKHS}
A.~Berlinet and C.~Thomas-Agnan.
\newblock \emph{Reproducing kernel {H}ilbert spaces in probability and
  statistics}.
\newblock Kluwer Academic Publisher, 2004.

\bibitem[Blei and Jordan(2006)]{BleiJordan2006}
D.~Blei and M.~Jordan.
\newblock Variational inference for dirichlet process mixtures.
\newblock \emph{Journal of Bayesian Analysis}, 1\penalty0 (1):\penalty0
  121--144, 2006.

\bibitem[Bowman(1984)]{Bowman1984}
Aedian~W. Bowman.
\newblock {An alternative method of cross-validation for the smoothing of
  density estimates}.
\newblock \emph{Biometrika}, 71\penalty0 (2):\penalty0 353--360, 1984.

\bibitem[Caponnetto and De~Vito(2007)]{CaponnettoDeVito2007}
A.~Caponnetto and E.~De~Vito.
\newblock Optimal rates for regularized least-squares algorithm.
\newblock \emph{Foundations of Computational Mathematics}, 7\penalty0
  (3):\penalty0 331--368, 2007.

\bibitem[Doucet et~al.(2001)Doucet, Freitas, and Gordon]{Docuet_etal_SMC}
A.~Doucet, N.~De Freitas, and N.J. Gordon.
\newblock \emph{Sequential Monte Carlo Methods in Practice}.
\newblock Springer, 2001.

\bibitem[Engl et~al.(2000)Engl, Hanke, and Neubauer]{EnglHankeNeubauer}
H.W. Engl, M.~Hanke, and A.~Neubauer.
\newblock \emph{Regularization of Inverse Problems}.
\newblock Kluwer Academic Publishers, 2000.

\bibitem[Fine and Scheinberg(2001)]{FineScheinberg2001}
S.~Fine and K.~Scheinberg.
\newblock Efficient {SVM} training using low-rank kernel representations.
\newblock \emph{Journal of Machine Learning Research}, 2:\penalty0 243--264,
  2001.

\bibitem[Fukumizu et~al.(2004)Fukumizu, Bach, and Jordan]{Fukumizu04_jmlr}
K.~Fukumizu, F.R. Bach, and M.I. Jordan.
\newblock Dimensionality reduction for supervised learning with reproducing
  kernel {H}ilbert spaces.
\newblock \emph{Journal of Machine Learning Research}, 5:\penalty0 73--99,
  2004.

\bibitem[Fukumizu et~al.(2009{\natexlab{a}})Fukumizu, Bach, and
  Jordan]{Fukumizu_etal09_KDR}
K.~Fukumizu, F.R. Bach, and M.I. Jordan.
\newblock Kernel dimension reduction in regression.
\newblock \emph{Annals of Statistics}, 37\penalty0 (4):\penalty0 1871--1905,
  2009{\natexlab{a}}.
\newblock ISSN 0090-5364.
\newblock \doi{10.1214/08-AOS637}.

\bibitem[Fukumizu et~al.(2009{\natexlab{b}})Fukumizu, Sriperumbudur, Gretton,
  and Schoelkopf]{FukSriGreSch09}
K.~Fukumizu, B.~Sriperumbudur, A.~Gretton, and B.~Schoelkopf.
\newblock Characteristic kernels on groups and semigroups.
\newblock In \emph{Advances in Neural Information Processing Systems 21}, pages
  473--480, Red Hook, NY, 2009{\natexlab{b}}. Curran Associates Inc.

\bibitem[Fukumizu et~al.(2008)Fukumizu, Gretton, Sun, and
  Sch{\"o}lkopf]{Fukumizu_etal_nips07}
Kenji Fukumizu, Arthur Gretton, Xiaohai Sun, and Bernhard Sch{\"o}lkopf.
\newblock Kernel measures of conditional dependence.
\newblock In \emph{Advances in Neural Information Processing Systems 20}, pages
  489--496. MIT Press, 2008.

\bibitem[Gretton et~al.(2007)Gretton, Borgwardt, Rasch, Sch\"{o}lkopf, and
  Smola]{Gretton_etal07}
A.~Gretton, K.M. Borgwardt, M.~Rasch, B.~Sch\"{o}lkopf, and A.~Smola.
\newblock A kernel method for the two-sample-problem.
\newblock In B.~Sch\"{o}lkopf, J.~Platt, and T.~Hoffman, editors,
  \emph{Advances in Neural Information Processing Systems 19}, pages 513--520.
  MIT Press, Cambridge, MA, 2007.

\bibitem[Gretton et~al.(2009{\natexlab{a}})Gretton, Fukumizu, Harchaoui, and
  Sriperumbudur]{Gretton_etal_nips2009}
A.~Gretton, K.~Fukumizu, Z.~Harchaoui, and B.~Sriperumbudur.
\newblock A fast, consistent kernel two-sample test.
\newblock In Y.~Bengio, D.~Schuurmans, J.~Lafferty, C.~K.~I. Williams, and
  A.~Culotta, editors, \emph{Advances in Neural Information Processing Systems
  22}, pages 673--681. 2009{\natexlab{a}}.

\bibitem[Gretton et~al.(2008)Gretton, Fukumizu, Teo, Song, Sch{\"o}lkopf, and
  Smola]{Gretton_etal_nips07}
Arthur Gretton, Kenji Fukumizu, Choon~Hui Teo, Le~Song, Bernhard Sch{\"o}lkopf,
  and Alex Smola.
\newblock A kernel statistical test of independence.
\newblock In \emph{Advances in Neural Information Processing Systems 20}, pages
  585--592. MIT Press, 2008.

\bibitem[Gretton et~al.(2009{\natexlab{b}})Gretton, Fukumizu, and
  Sriperumbudur]{Gretton_etal_AOAS2009}
Arthur Gretton, Kenji Fukumizu, and Bharath~K. Sriperumbudur.
\newblock Discussion of: Brownian distance covariance.
\newblock \emph{Annals of Applied Statistics}, 3\penalty0 (4):\penalty0
  1285--1294, 2009{\natexlab{b}}.

\bibitem[Hofmann et~al.(2008)Hofmann, Sch{\"o}lkopf, and
  Smola]{Hofmann_etal_2008_kernel_AS}
Thomas Hofmann, Bernhard Sch{\"o}lkopf, and Alexander~J. Smola.
\newblock Kernel methods in machine learning.
\newblock \emph{The Annals of Statistics}, 36\penalty0 (3):\penalty0
  1171--1220, 2008.

\bibitem[Julier and Uhlmann(1997)]{JulierUhlmann1997}
S.J. Julier and J.K. Uhlmann.
\newblock A new extension of the kalman filter to nonlinear systems.
\newblock In \emph{Proceedings of AeroSense: The 11th International Symposium
  on Aerospace/Defence Sensing, Simulation and Controls}, 1997.

\bibitem[Kankainen and Ushakov(1998)]{KankainenUshakov1998}
A.~Kankainen and N.G. Ushakov.
\newblock A consistent modification of a test for independence based on the
  empirical characteristic function.
\newblock \emph{Journal of Mathematical Sciencies}, 89:\penalty0 1582--1589,
  1998.

\bibitem[MacEachern(1994)]{MacEachern1994}
S~MacEachern.
\newblock Estimating normal means with a conjugate style dirichlet process
  prior.
\newblock \emph{Communications in Statistics -- Simulation and Computation},
  23\penalty0 (3):\penalty0 727--741, 1994.

\bibitem[MacEachern et~al.(1999)MacEachern, Clyde, and
  Liu]{MacEachern_etal1999}
Steven~N. MacEachern, Merlise Clyde, and Jun~S. Liu.
\newblock Sequential importance sampling for nonparametric bayes models: The
  next generation.
\newblock \emph{The Canadian Journal of Statistics}, 27\penalty0 (2):\penalty0
  251--267, 1999.

\bibitem[Marjoram et~al.(2003)Marjoram, Molitor, Plagnol, and
  Tavare]{Marjoram_etal_2003PNAS}
Paul Marjoram, John Molitor, Vincent Plagnol, and Simon Tavare.
\newblock Markov chain monte carlo without likelihoods.
\newblock \emph{Proceedings of the National Academy of Sciences}, 100\penalty0
  (26):\penalty0 15324--15328, 2003.

\bibitem[Mika et~al.(1999)Mika, Sch{\"o}lkopf, Smola, M{\"u}ller, Scholz, and
  R{\"a}tsch]{Mika99kernelpca}
Sebastian Mika, Bernhard Sch{\"o}lkopf, Alex Smola, Klaus-Robert M{\"u}ller,
  Matthias Scholz, and Gunnar R{\"a}tsch.
\newblock Kernel {PCA} and de-noising in feature spaces.
\newblock In \emph{Advances in Neural Information Pecessing Systems 11}, pages
  536--542. MIT Press, 1999.

\bibitem[Monbet et~al.(2008)Monbet, Ailliot, and Marteau]{Monbet_etal2008}
V.~Monbet, P.~Ailliot, and P.F. Marteau.
\newblock $l^1$-convergence of smoothing densities in non-parametric state
  space models.
\newblock \emph{Statistical Inference for Stochastic Processes}, 11:\penalty0
  311--325, 2008.

\bibitem[M{\"u}ller and Quintana(2004)]{MullerQuintana2004}
P.~M{\"u}ller and F.A. Quintana.
\newblock Nonparametric bayesian data analysis.
\newblock \emph{Statistical Science}, 19\penalty0 (1):\penalty0 95--110, 2004.

\bibitem[Rudemo(1982)]{Rudemo1982}
Mats Rudemo.
\newblock Empirical choice of histograms and kernel density estimators.
\newblock \emph{Scandinavian Journal of Statistics}, 9\penalty0 (2):\penalty0
  pp. 65--78, 1982.

\bibitem[Sch{\"o}lkopf and Smola(2002)]{SchoelkopfSmola_book}
B.~Sch{\"o}lkopf and A.J. Smola.
\newblock \emph{Learning with Kernels}.
\newblock MIT Press, 2002.

\bibitem[Sisson et~al.(2007)Sisson, Fan, and Tanaka]{Sisson_etal2007}
S.~A. Sisson, Y.~Fan, and Mark~M. Tanaka.
\newblock Sequential monte carlo without likelihoods.
\newblock \emph{Proceedings of the National Academy of Sciences}, 104\penalty0
  (6):\penalty0 1760--1765, 2007.

\bibitem[Smale and Zhou(2007)]{SmaleZhou2005}
Steve Smale and Ding-Xuan Zhou.
\newblock Learning theory estimates via integral operators and their
  approximation.
\newblock \emph{Constructive Approximation}, 26:\penalty0 153--172, 2007.

\bibitem[Song et~al.(2009)Song, Huang, Smola, and Fukumizu]{Song_etal_ICML2009}
L.~Song, J.~Huang, A.~Smola, and K.~Fukumizu.
\newblock Hilbert space embeddings of conditional distributions with
  applications to dynamical systems.
\newblock In \emph{Proceedings of the 26th International Conference on Machine
  Learning (ICML2009)}, pages 961--968. 2009.

\bibitem[Song et~al.(2010{\natexlab{a}})Song, Gretton., and
  Guestrin]{Song_etal_AISTATS2010}
L.~Song, A.~Gretton., and C.~Guestrin.
\newblock Nonparametric tree graphical models via kernel embeddings.
\newblock In \emph{Proceedings of AISTATS 2010}, pages 765--772,
  2010{\natexlab{a}}.

\bibitem[Song et~al.(2010{\natexlab{b}})Song, Siddiqi, Gordon, and
  Smola]{Song_etal_ICML2010}
L.~Song, S.~M. Siddiqi, G.~Gordon, and A.~Smola.
\newblock Hilbert space embeddings of hidden markov models.
\newblock In \emph{Proceedings of the 27th International Conference on Machine
  Learning (ICML2010)}, pages 991--998. 2010{\natexlab{b}}.

\bibitem[Song et~al.(2011)Song, Gretton, Bickson, Low, and
  Guestrin]{SonGreBicLowGue11}
L.~Song, A.~Gretton, D.~Bickson, Y.~Low, and C.~Guestrin.
\newblock Kernel belief propagation.
\newblock In \emph{Proceedings of AISTATS 2011}, pages 707--715, 2011.

\bibitem[Sriperumbudur et~al.(2010)Sriperumbudur, Gretton, Fukumizu,
  Sch{\"o}lkopf, and Lanckriet]{Sriperumbudur_etal2010JMLR}
Bharath~K. Sriperumbudur, Arthur Gretton, Kenji Fukumizu, Bernhard
  Sch{\"o}lkopf, and Gert~R.G. Lanckriet.
\newblock Hilbert space embeddings and metrics on probability measures.
\newblock \emph{Journal of Machine Learning Research}, 11:\penalty0 1517--1561,
  2010.

\bibitem[Sriperumbudur et~al.(2011)Sriperumbudur, Fukumizu, and
  Lanckriet]{Sriperumbudur_etal2011JMLR}
Bharath~K. Sriperumbudur, Kenji Fukumizu, and Gert Lanckriet.
\newblock Universality, characteristic kernels and rkhs embedding of measures.
\newblock \emph{Journal of Machine Learning Research}, 12:\penalty0 2389--2410,
  2011.

\bibitem[Steinwart(2001)]{Steinwart01}
I.~Steinwart.
\newblock On the influence of the kernel on the consistency of support vector
  machines.
\newblock \emph{Journal of Machine Learning Research}, 2:\penalty0 67--93,
  2001.

\bibitem[Steinwart and Christmann(2008)]{SteChr08}
Ingo Steinwart and Andreas Christmann.
\newblock \emph{Support Vector Machines}.
\newblock Information Science and Statistics. Springer, 2008.

\bibitem[Tavar{\'e} et~al.(1997)Tavar{\'e}, Balding, Griffithis, and
  Donnelly]{Tavare_etal_1997ABC}
S.~Tavar{\'e}, D.J. Balding, R.C. Griffithis, and P.~Donnelly.
\newblock Inferring coalescence times from dna sequece data.
\newblock \emph{Genetics}, 145:\penalty0 505--518, 1997.

\bibitem[Thrun et~al.(1999)Thrun, Langford, and Fox]{Thurn_etal_ICML1999}
S.~Thrun, J.~Langford, and D.~Fox.
\newblock Monte carlo hidden markov models: Learning non-parametric models of
  partially observable stochastic processes.
\newblock In \emph{Proceedings of International Conference on Machine Learning
  (ICML 1999)}, pages 415--424, 1999.

\bibitem[West et~al.(1994)West, M{\"u}ller, and Escobar]{West_etal1994}
Mike West, Peter M{\"u}ller, and Michael~D. Escobar.
\newblock Hierarchical priors and mixture models, with applications in
  regression and density estimation.
\newblock In P.~Freeman et~al, editor, \emph{Aspects of Uncertainty: A Tribute
  to D.V. Lindley}, pages 363--386. 1994.

\end{thebibliography}

\end{document}